\documentclass[11pt,oneside,letter]{article}
\usepackage[top=1 in, bottom=1 in, left=0.85 in, right=0.85 in]{geometry}

\usepackage{amsmath,amsfonts,bm}

\def\eqref#1{equation~\ref{#1}}

\def\1{\bm{1}}

\DeclareMathAlphabet{\mathsfit}{\encodingdefault}{\sfdefault}{m}{sl}
\SetMathAlphabet{\mathsfit}{bold}{\encodingdefault}{\sfdefault}{bx}{n}

\newcommand{\E}{\mathbb{E}}

\DeclareMathOperator*{\argmax}{arg\,max}

\usepackage{float}
\usepackage{times}
\usepackage{mathtools}
\usepackage[round]{natbib}
\usepackage{makecell}
\usepackage{amsmath,amssymb,graphicx,url}
\usepackage{thmtools,thm-restate,wrapfig,enumitem,mathabx}
\usepackage{booktabs}
\usepackage{color}
\usepackage{colortbl}
\usepackage{multirow}

\usepackage[utf8]{inputenc} 
\usepackage[T1]{fontenc}    
\usepackage{hyperref}       
\usepackage{url}            
\usepackage{booktabs}       
\usepackage{amsfonts}       
\usepackage{nicefrac}       
\usepackage{microtype}      
\usepackage{xcolor}         

\usepackage{algorithm}
\usepackage[noend]{algpseudocode}
\usepackage{titletoc}
\usepackage{bbm}
\usepackage{caption} 
\usepackage{subcaption}

\newtheorem{theorem}{Theorem}

\newtheorem{lemma}{Lemma}

\newtheorem{proof}{Proof}

\newtheorem{proposition}{Proposition}

\newcommand{\prob}{\mathbb{P}}

\title{SP3O: Reinforcement Learning from Segment Preferences without Reward Modeling}
\date{}
\author{
	Evan Assmus \\
	University of Michigan, Ann Arbor \\  \texttt{eassmus@umich.edu} \\ 
	\and
	Qining Zhang \\
	University of Michigan, Ann Arbor \\  \texttt{qiningz@umich.edu} \\ 
	\and
	Lei Ying \\
	University of Michigan, Ann Arbor \\  \texttt{leiying@umich.edu}\\
	\\
}

\begin{document}

\maketitle

\begin{abstract}
Preference-based reinforcement learning (PbRL) for general stochastic MDPs often requires training a reward model. Existing reward-model-free methods are either restricted to bandits or deterministic MDPs, such as DPO or P3O, or use zeroth-order, gradient-free optimization, which in general exhibits a slower convergence rate than gradient-based algorithms. Furthermore, existing reward-model-free preference-based RL algorithms almost exclusively use trajectory-level feedback, which can require significant effort from a human evaluator when trajectories are long. On the other hand, segments are much shorter, so they are easier to compare and evaluate. In this paper, we introduce a novel reward-model-free, critic-free, and gradient-based PbRL algorithm compatible with segment preferences named Segment Pairwise Proximal Policy Optimization (SP3O). SP3O utilizes segment-level preference feedback to construct an accurate policy value difference estimator via off-policy importance sampling, and then uses the estimator to compute the policy gradient via a PPO-type loss function. We provide a theoretical basis for the algorithm and analyze the tradeoff in choosing the segment length. We also evaluate it experimentally against other PbRL/RLHF algorithms in robotic control and LLM finetuning settings to show its improved performance, especially in long-horizon tasks.
\end{abstract}

\section{Introduction}

Reinforcement Learning (RL) \citep{sutton1998reinforcement} has seen great success across the domains of game playing \citep{silver2016mastering, vinyals2019grandmaster, mnih2013playing} and robotic control \citep{schulman2017proximalpolicyoptimizationalgorithms, haarnoja2018soft}. However, applying RL to more complex real-world tasks remains challenging, where one major bottleneck is designing a reward function that will naturally encourage agents to learn the desired behavior and avoid reward hacking~\citep{skalse2022defining}. Preference-based RL (PbRL) has emerged as a powerful framework for addressing these issues, in which, instead of receiving a numerical reward at each interaction step, the agent actively queries preferences over behaviors from oracles (humans or AI systems~\citep{lee2023rlaif}).
The candidates to be compared usually take the form of pairs of full trajectories or segments. 
The oracle then returns its preferences about the candidates, which are then used to improve and optimize the policy. This method has seen great success in the domains of fine-tuning language models \citep{ouyang2022traininglanguagemodelsfollow, bai2022training, rafailov2023direct, shao2024deepseekmath}, continuous control tasks \citep{christiano2017deep}, game playing \citep{christiano2017deep, ibarz2018reward}, and fine-tuning image generation models \citep{lee2023aligning}. 

\textbf{Reward Model.} Many existing preference-based RL algorithms, including Reinforcement Learning from Human Feedback (RLHF) for LLMs \citep{ouyang2022traininglanguagemodelsfollow, bai2022training, shao2024deepseekmath}, work by training an additional neural network, called the \emph{reward model}, and using a classic reward-based RL method, such as PPO \citep{schulman2017proximalpolicyoptimizationalgorithms}, to optimize the policy \citep{christiano2017deep}
from the numeric reward generated by the reward model. 
Depending on the application, the reward model can be trained offline, online, or a combination of both. The process generally assumes that preferences come from a Bradley-Terry model~\citep{bradleyterry} based on an unknown true reward function, and then maximizes the log-likelihood of seeing the preferences. Specifically, most of these works assume that preference processes are determined by either the partial reward sum of a trajectory segment \citep{christiano2017deep, ibarz2018reward} or the total reward of a trajectory \citep{ouyang2022traininglanguagemodelsfollow, bai2022training}, i.e., for two trajectories (segments) $\sigma^1$ and $\sigma^2$, both consisting of $L$ steps, the probability of preferring one candidate over the other is governed by the partial return model, i.e.,
\begin{align}\label{eq:BT}
    \prob\left[\sigma^1 \succ \sigma^2\right] = \mathtt{logistic}\left(  \sum_{k=1}^L r\left(s_k^1,a_k^1\right) - \sum_{k=1}^L r\left(s_k^2,a_k^2\right)\right),
\end{align}
where $\sigma^1=\{s_k^1, a_k^1\}_{k=1}^L$ and $\sigma^2=\{s_k^2, a_k^2\}_{k=1}^L$ are the states and actions in time sequence constituting trajectories (segments) $\sigma^1$ and $\sigma^2$ respectively, and $\mathtt{logistic}(\cdot)$ is the standard logistic function. One may also replace the logistic function with other link functions~\citep{zhang2025zeroth}.
This formulation has been successful in practice, but recent works raise questions about the adequacy of such a simple preference model, and variants have also been proposed~\citep{knox2022models, hejna2024contrastivepreferencelearninglearning}. More importantly, training a reward model is challenging: it complicates the training pipeline, requires additional memory and computational resources, and raises new concerns during evaluation, including distribution shift, lack of ground truth, and overfitting in joint training \citep{casper2023open}. Therefore, it is usually more appropriate to treat reward models as semi-quantitative evaluation tools instead of a perfect proxy of the true reward function.

\textbf{RL Directly from Preferences.} Methods that do not rely on reward model training, such as DPO~\citep{rafailov2023direct} and P3O~\citep{wu2023pairwiseproximalpolicyoptimization}, have also been introduced, but only under restrictive assumptions. For example, these algorithms were originally designed for contextual bandits and can only be generalized to MDPs with a deterministic transition, making them unsuitable for more complex real-world RL problems where stochasticity naturally arises from both environments and the interaction with humans. Furthermore, these algorithms require preferences to be between full trajectories that start from the same state.
For general RL settings, one approach is to use zeroth-order methods. For example, \citet{zhang2025zeroth,zhang2025rlhf} introduced a zeroth-order policy optimization algorithm that works by perturbing and comparing two policies to perform a zeroth-order policy update. Another approach is to use an evolutionary optimization strategy with preferences \citep{busa2014preference}. 
However, in optimization, it is known that zeroth-order and evolutionary approaches usually do not converge as fast as gradient-based first-order methods, especially when the size of the actor neural network scales up. 

\textbf{RLHF with Segments.} Evaluating short segments instead of long trajectories is often easier for human evaluators. Furthermore, when preferences are binary, evaluating more short segments gives more information than evaluating long segments even when the total length is the same. For these reasons algorithms have been developed that use preferences between segments \citep{wilson2012bayesian, christiano2017deep, lai2024step, guo2026segment}. However, these methods either require segments that start from the same state \citep{lai2024step, wilson2012bayesian, guo2026segment} which is not possible in some environments or use a reward model \citep{christiano2017deep, guo2026segment}.

In this paper, we consider a general stochastic MDP and aim to design a gradient-based algorithm for preference-based RL without reward modeling. Furthermore, for reasons discussed above, we are interested in RL with segment preferences, where segments do not necessarily start from the same state. In short, our goal is to train a policy directly from preferences between arbitrary segments.

\subsection{Main Contributions} In this paper, we introduce a novel PbRL algorithm using segment preferences without reward models, called \emph{\textbf{S}egment \textbf{P}airwise \textbf{P}roximal \textbf{P}olicy \textbf{O}ptimization} (SP3O). SP3O can be viewed as a generalization of P3O \citep{wu2023pairwiseproximalpolicyoptimization} to the stochastic MDP setting with segment preferences and allowing different starting states for preferences. 

\noindent{\bf Algorithm Design.} SP3O has the following characteristics:
\begin{itemize}
    \item SP3O is reward-model-free, critic-free, and applicable to general stochastic MDPs.
    \item SP3O uses segment-level preferences instead of trajectory-level preferences, which are more efficient to evaluate.  
    \item SP3O uses gradient-based policy optimization and a hybrid on-policy and off-policy approach to make efficient use of human preferences.
\end{itemize}

\noindent{\bf Theoretical Analysis.} We provide theoretical backup for the design of SP3O:
\begin{itemize}
    \item We show that RL with segment feedback can be viewed as a discounted finite-horizon MDP, with carefully defined terminal rewards and initial state distribution, and prove that the policy gradient of this finite-horizon MDP is a scaled version of the policy gradient of the original MDP (Theorem \ref{thm:difference_theorem_statement}). This formally justifies that policy gradient can be computed with segment preferences. 

    \item We characterize an estimator error bound for a given segment length (Proposition \ref{prop:L_theorem_statement}), which reveals a fundamental tradeoff in choosing the segment length, which should not be too small or too large. 
\end{itemize}

\noindent{\bf Experimental Evaluations.} We evaluate SP3O empirically against Online DPO \citep{guo2024direct}, P3O \citep{wu2023pairwiseproximalpolicyoptimization}, and ZPG \citep{zhang2025zeroth} in simulated robotic control environments and an LLM finetuning task. In both experiments we show that SP3O outperforms the other algorithms in most settings, with its advantage growing as the horizon grows. We also conduct a small ablation study on the segment length to show that using either a too small or a too large segment length can result in sub-optimal performance, which is consistent with the tradeoff described in Proposition~\ref{prop:L_theorem_statement}.

\section{Preliminaries}

We consider an infinite-horizon discounted Markov Decision Process (MDP) defined by the tuple $\mathcal{M} = (\mathcal{S}, \mathcal{A}, d_0, r, \gamma, P)$, with state space $\mathcal{S}$, action space $\mathcal{A}$, initial state distribution $d_0$, reward function $r(s,a):\mathcal{S} \times \mathcal{A} \to [0,1]$, discount factor $\gamma \in (0,1)$, and transition kernel $P$. 
We consider the reinforcement learning from preference feedback setting, where the agent does not directly observe reward $r(s,a)$ and has to be trained with preference feedback usually generated by humans.

A policy $\pi:\mathcal{S} \to \Delta(\mathcal{A})$ is a function from the state space to a probability distribution on actions. We consider a parametrized policy $\pi_\theta,$
where $\theta$ could be a neural network. 
We define the state and state-action value functions for MDP $\mathcal{M}$ under policy $\pi$ as:
\begin{subequations}
\begin{align}
    V_\mathcal{M}^\pi(s) =& \E_{a_t \sim \pi(\cdot | s_t), s_1 = s} \left [\sum_{t=1}^\infty \gamma^{t-1}r(s_t,a_t) \right], \\
    Q_\mathcal{M}^\pi(s,a) =& \E_{a_t \sim \pi(\cdot | s_t), s_1 = s, a_1 = a} \left [\sum_{t=1}^\infty \gamma^{t-1}r(s_t,a_t)\right],
\end{align} 
\end{subequations} where $s_t$ is the state at time $t$ and $a_t$ is the action taken at time $t.$
We also define the state occupancy measure of policy $\pi$ at timestep $t$ as $d^{\pi}_{\mathcal{M},t}(s) = \prob[s_t = s \mid s_1 \sim d_0, \pi]$, the discounted state occupancy measure of policy $\pi$ as: $d_\mathcal{M}^{\pi}(s) = (1-\gamma) \sum_{t=1}^\infty \gamma^{t-1} d^{\pi}_{\mathcal{M},t}(s)$, and the value of policy $\pi$ as the expected discounted reward as follows,
\begin{align}
    J_\mathcal{M}(\pi) = \E_{s_1 \sim d_0}[V^\pi_\mathcal{M}(s_1)]
\end{align}
Our goal is to find policy $\pi_{\theta^*}$ that maximizes the value, i.e., $\theta^*\in \argmax_\theta J_\mathcal{M}(\pi_\theta)$. 

\subsection{Preference Feedback on Segments}

Given two segments $\sigma^1=\{s_k^1, a_k^1\}_{k=1}^L$ and $\sigma^2=\{s_k^2, a_k^2\}_{k=1}^L$  with length $L$, we assume $\sigma^1$ is preferred over $\sigma^2$ with probability:
\begin{align}\label{eq:preference}
    \prob\left[\sigma^1 \succ \sigma^2\right] =\mathtt{logistic}\Big( \left[R(\sigma^1) + \gamma^{L-1} \widehat{Q}(s_L^1, a_L^1)\right] - \left[R(\sigma^2) + \gamma^{L-1} \widehat{Q}(s_L^2, a_L^2)\right] \Big),
\end{align}
where $R(\sigma) = \sum_{k=1}^{L-1} \gamma^{k-1} r(s_k, a_k)$ is the discounted reward sum of segment $\sigma$ with length $L$ and 
$$\widehat{Q}(s_L,a_L) = Q_{\mathcal{M}}^{\pi_\mathrm{ref}}(s_L,a_L) + \xi$$ is a noisy estimation of $Q_{\mathcal{M}}^{\pi_\mathrm{ref}}(s_L,a_L)$ (the $Q$-value of the last state-action pair of the reference policy) and $\xi\in[-\nu,\nu]$ is a bounded noise that could be adversarial. 

Given an estimate of the preference $\hat{\prob}[\sigma^1 \succ \sigma^2]$, we can use the inverse of the logistic function to recover an estimate $D(\sigma^1, \sigma^2)$ of the human-perceived 
return difference:
\begin{align}\label{eq:D_eq}
 D(\sigma^1, \sigma^2) = \texttt{logistic}^{-1}(\hat\prob[\sigma^1 \succ \sigma^2]) \approx \left[R(\sigma^1) + \gamma^{L-1} \widehat{Q}(s_L^1, a_L^1)\right] - \left[R(\sigma^2) + \gamma^{L-1} \widehat{Q}(s_L^2, a_L^2)\right].
\end{align}

We note that even though the partial return preference model in~\eqref{eq:BT} has been assumed in most of the PbRL/RLHF literature \citep{christiano2017deep, sadigh2017active, ibarz2018reward, lee2021pebblefeedbackefficientinteractivereinforcement, ouyang2022traininglanguagemodelsfollow, bai2022training}, it has been observed that the real preference of humans also depends on the quality of the final state-action pair~\citep{knox2022models} of the segment, which is not captured in the partial return model~\eqref{eq:BT}. To address this \citet{knox2022models} introduced the regret model
\begin{align*}
\prob_\mathrm{regret}[\sigma^1 \succ \sigma^2] = \frac{\exp \left (\sum_{t=1}^L-A^{\pi^*}(s_t^1,a_t^1)\right )}{\exp \left (\sum_{t=1}^L-A^{\pi^*}(s_t^1,a_t^1) \right ) + \exp \left (\sum_{t=1}^L-A^{\pi^*}(s_t^2,a_t^2) \right )},
\end{align*} where $A^{\pi^*}$ is the advantage function of the optimal policy. However, it is possible that the optimal policy is unknown to the human, meaning that they cannot meaningfully estimate this quantity. In contrast, our model only requires the human to be able to approximate the future performance of the policy they are currently viewing, which allows them to extrapolate viewed behavior to future states. For instance, in a goal-reaching task an evaluator would prefer a segment where the agent is moving closer to the goal over a segment where the agent is moving away, even if the average distance to the goal is the same. The partial return model may not capture this difference, while ours can. We provide a further comparison of preference models in Section~\ref{sec:related_work}.

\section{Segment Pairwise Proximal Policy Optimization}

In this section, we first present our algorithm, SP3O (Algorithm~\ref{alg:SP3O}), based on segment preferences. 
Each policy iteration of SP3O consists the following steps:
\begin{itemize}
    \item sample $R$ trajectories from a reference policy $\pi_{\theta_\mathrm{ref}}$;
    \item use these trajectories to sample ${N_p}$ segment pairs $\{(\sigma^1_n,\sigma_n^2)\}_{n=1}^{N_p}$ with a discounted probability distribution (Algorithm~\ref{alg:sample_pairs});
    \item use the preference to generate estimates of the return difference $D(\sigma^1_n, \sigma_n^2)$ for each segment pair and create a preference dataset $\mathcal{D}_{\mathrm{ref}} = \{(\sigma_n^1,\sigma_n^2, D(\sigma^1_n,\sigma_n^2))\}_{n=1}^{N_p}$;
    \item estimate the policy gradient from a loss function (Equation~\ref{eq:loss}) constructed through off-policy importance sampling, and update the current policy $\pi_{\theta_t}$ accordingly.
\end{itemize}
\begin{algorithm*}[htbp]
\caption{Segment Pairwise Proximal Policy Optimization (SP3O)} 
\label{alg:SP3O}
\begin{algorithmic}[1]
\Require initial parameter $\theta_0$, trajectory segment length $L$, preference oracle, number of policy updates $B$, number of policy updates per reference policy $T$, effective horizon $H=C_0/(1-\gamma)$ which is a multiple of $L$, number of reference trajectories $R$, number of trajectory pairs $N_p$, number of human evaluators $M$;
\State $\theta_{\rm{ref}} \gets \theta_0$;
\For{$i = 1 : B$}
    \State sample a batch $\mathcal{T}_{\mathrm{ref},i} = \{\tau_{j}\}_{j=1}^R$ of trajectories with length $H$ using behavior policy $\pi_{\theta_\mathrm{ref}}$;
    \State sample segment pairs $\{(\sigma_n^1,\sigma_n^2)\}_{n=1}^{N_p}$ with Algorithm~\ref{alg:sample_pairs};
    \For{$n=1 : N_p$}
        \State query the preference oracle $M$ times and gather the binary feedback $\{o_{n,m}\}_{m=1}^M$ where \\ \hspace{\algorithmicindent} \hspace{\algorithmicindent}  $\prob[o_{n,m} = 1] = \prob[\sigma_n^1 \succ \sigma_n^2]$ and estimate the preference as $\hat{\prob}[\sigma_n^1 \succ \sigma_n^2]$ as $\frac{\sum_{m=1}^M o_m}{M}$;
    \EndFor
    \State for each segment pair $(\sigma_n^1, \sigma_n^2)$, use $\hat{\prob}[\sigma^1 \succ \sigma^2]$ to obtain $D(\sigma^1_n,\sigma^2_n)$ using Equation~\eqref{eq:D_eq} ;
    \State create preference dataset $\mathcal{D}_{\mathrm{ref}} = \{(\sigma_n^1,\sigma_n^2, D(\sigma^1_n,\sigma_n^2))\}_{n=1}^{N_p}$;
    \For{$t = 1 : T$} 
        \State update $\theta_t$ by minimizing $\mathcal{L}_{\mathrm{SP3O}}(\theta_t; \theta_{t-1}, \theta_\mathrm{ref}, \mathcal{D}_\mathrm{ref,i})$~\eqref{eq:loss};
    \EndFor
    \State $\theta_\mathrm{ref} \gets \theta_T$, $\theta_0 \gets \theta_\mathrm{ref}$;
\EndFor
\State \Return $\theta_T$
\end{algorithmic}
\end{algorithm*}
\begin{algorithm}[ht]
\caption{Sample Segment Pairs} 
\label{alg:sample_pairs}
\begin{algorithmic}[1]
\Require trajectories $\{\tau_j\}_{j=1}^R$ with $\tau_j = \{s_{j,t},a_{j,t}\}_{t=1}^H$, segment length $L$, number of pairs $N_p$;
\State decompose each $\tau_j$ into $\{(s_{j,t},a_{j,t})\}_{t=1}^H$;
\State split trajectories into non-overlapping segments $\psi_{i,j} = \{(s_{i,jL+t},a_{i,jL+t})\}_{t=1}^{L}$ for $1 \leq i \leq R$ and $0 \leq j < H/L$;
\State assign each segment $\psi_{i,j}$ a weight $w_{i,j} \gets \gamma^{L\cdot j}$;
\State sample first segments $\{\sigma_n^1\}_{n=1}^{N_p}$ from $\psi_{i,j}$s according to their weights such that $\psi_{i,j}$ is sampled with probability $\frac{w_{i,j}}{\sum_{i',j'} w_{i',j'}}$;
\State sample second segments $\{\sigma_n^2\}_{n=1}^{N_p}$ from $\psi_{i,j}$s according to their weights such that $\psi_{i,j}$ is sampled with probability $\frac{w_{i,j}}{\sum_{i',j'} w_{i',j'}}$ ensuring that $\sigma_n^1 \neq \sigma_n^2$;
\State \Return $\{(\sigma_n^1,\sigma_n^2)\}_{n=1}^{N_p}$
\end{algorithmic}
\end{algorithm}
Our algorithm uses a loss function with PPO-style clipping regularization defined below
\begin{align}\label{eq:loss}
    \mathcal {L}_\mathrm{SP3O}(\theta_t ; \theta_{t-1}, \theta_\mathrm{ref}, \mathcal{D}_\mathrm{ref}) = - \frac{1}{N_p} \sum_{n=1}^{N_p} \min \Bigg ( & D(\sigma^1_n,\sigma^2_n) \prod_{k=1}^L \frac{\pi_{\theta_t}(a_{n,k}^1|s_{n,k}^1)}{\pi_{\theta_\mathrm{ref}}(a_{n,k}^1|s_{n,k}^1)} \frac{\pi_{\theta_{t-1}}(a_{n,k}^2|s_{n,k}^2)}{\pi_{\theta_\mathrm{ref}}(a_{n,k}^2|s_{n,k}^2)} \nonumber, \\ &D(\sigma^1_n,\sigma^2_n) \prod_{k=1}^L \left ( \frac{\pi_{\theta_t}(a_{n,k}^1|s_{n,k}^1)}{\pi_{\theta_\mathrm{ref}}(a_{n,k}^1|s_{n,k}^1)} \right )^{1+\epsilon}_{1-\epsilon} \left (\frac{\pi_{\theta_{t-1}}(a_{n,k}^2|s_{n,k}^2)}{\pi_{\theta_\mathrm{ref}}(a_{n,k}^2|s_{n,k}^2)} \right )^{1+\epsilon}_{1-\epsilon} \Bigg ) \end{align}
where $(\cdot)^{1+\epsilon}_{1-\epsilon}$ is the clip operator in PPO, i.e., $(\cdot)^{1+\epsilon}_{1-\epsilon} = \min\{\max\{\cdot, 1-\epsilon\}, 1+\epsilon\}$.

We next explain the construction of this loss function. At policy iteration step $t$, the policy gradient can be computed using the policy value difference since:
$$\nabla_{\theta_t} J_{\mathcal{M}}(\pi_{\theta_t}) = \nabla_{\theta_t}(J_\mathcal{M}(\pi_{\theta_t}) - J_\mathcal{M}(\pi_{\theta_{t-1}})).$$
The policy value difference $J_{\mathcal{M}}(\pi_{\theta_t}) - J_{\mathcal{M}}(\pi_{\theta_{t-1}})$ can be evaluated {\em off-policy} using trajectories generated from the reference policy $\pi_{\mathrm{ref}}$ of length $H=C_0/(1-\gamma)$ according to the following lemma. 
\begin{lemma}\label{lemma:pdsurrogate}
Suppose we have three policies $\pi_1,\pi_2,\pi_\mathrm{ref}$. Assume that the support of $\pi_1$ and $\pi_2$ is enclosed within the support of $\pi_\mathrm{ref}$, then a surrogate to estimate $J_{\mathcal{M}}(\pi_1) - J_{\mathcal{M}}(\pi_2)$ is given by the following approximation:
\begin{align}
    &J_\mathcal{M}(\pi_1) - J_\mathcal{M}(\pi_2) \nonumber \\ =& \E_{\substack{a^1_t, a^2_t \sim \pi_\mathrm{ref}, \\ s^1_1, s^2_1 \sim d_0}}\Bigg [ \prod_{t=1}^H\frac{\pi_1(a^1_t|s^1_t)}{\pi_\mathrm{ref}(a^1_t|s^1_t)} \frac{\pi_2(a^2_t|s^2_t)}{\pi_\mathrm{ref}(a^2_t|s^2_t)} \times \left( \sum_{t=1}^H \gamma^{t-1}r(s^1_t,a^1_t) - \sum_{t=1}^H \gamma^{t-1}r(s^2_t,a^2_t) \right)\Bigg] \label{eq:objective-diff} + \lambda,
\end{align}
where $\lambda=\mathcal{O}\left(\gamma^H\right) = \mathcal{O}(e^{-C_0})$ due to the use of an effective horizon and $\tau^1 = \{(s_{t}^1, a_{t}^1)\}_{t=1}^H$ and $\tau^2 = \{(s_{t}^2, a_{t}^2)\}_{t=1}^H$ are independent trajectories. 
\end{lemma}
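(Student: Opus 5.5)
We will prove the identity in three steps: truncate the horizon, apply a change of measure to each trajectory separately, and then combine the two trajectories into a single product expectation.

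\textbf{Step 1 (truncation).} Write $J_\mathcal{M}(\pi) = \E_{\tau\sim\pi}\big[\sum_{t=1}^\infty \gamma^{t-1} r(s_t,a_t)\big]$. Since $r\in[0,1]$, the tail beyond $H$ satisfies
\[
0\le \E_{\tau\sim\pi}\Big[\sum_{t>H}\gamma^{t-1} r(s_t,a_t)\Big]\le \frac{\gamma^H}{1-\gamma}.
\]
Hence $J_\mathcal{M}(\pi_i)=J^H(\pi_i)+\lambda_i$, where
\[
J^H(\pi)=\E_{\tau\sim\pi}\Big[\sum_{t=1}^H\gamma^{t-1}r(s_t,a_t)\Big]
\]
and $|\lambda_i|\le \gamma^H/(1-\gamma)$. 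With $H=C_0/(1-\gamma)$ we have $\gamma^H=(1-(1-\gamma))^{C_0/(1-\gamma)}\le e^{-C_0}$. The difference $\lambda=\lambda_1-\lambda_2$ is therefore $\mathcal{O}(\gamma^H)=\mathcal{O}(e^{-C_0})$, treating $1/(1-\gamma)$ as a constant in the $\mathcal{O}$ as the statement implicitly does.

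\textbf{Step 2 (importance sampling per trajectory).} The law of a length-$H$ trajectory under $\pi$ is
\[
d_0(s_1)\prod_{t=1}^H \pi(a_t|s_t)\prod_{t=1}^{H-1}P(s_{t+1}|s_t,a_t).
\]
The initial distribution and the transition factors are identical under $\pi_i$ and $\pi_\mathrm{ref}$, so the Radon--Nikodym derivative is $\rho_i(\tau)=\prod_{t=1}^H \pi_i(a_t|s_t)/\pi_\mathrm{ref}(a_t|s_t)$. This derivative is well defined by the support assumption. It follows that $J^H(\pi_i)=\E_{\tau\sim\pi_\mathrm{ref}}[\rho_i(\tau)G(\tau)]$, where $G(\tau)=\sum_{t=1}^H\gamma^{t-1}r(s_t,a_t)$. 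Here $G$ is bounded and all expectations are finite, so no integrability issue arises.

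\textbf{Step 3 (coupling two independent trajectories).} Let $\tau^1,\tau^2$ be independent draws from $\pi_\mathrm{ref}$. The change of measure gives $\E_{\pi_\mathrm{ref}}[\rho_i]=1$. Using independence,
\[
\E[\rho_1(\tau^1)\rho_2(\tau^2)G(\tau^1)]=\E[\rho_1(\tau^1)G(\tau^1)]\,\E[\rho_2(\tau^2)]=J^H(\pi_1).
\]
Symmetrically, $\E[\rho_1(\tau^1)\rho_2(\tau^2)G(\tau^2)]=J^H(\pi_2)$. Subtracting the two identities yields exactly the expectation in the statement, and adding back $\lambda$ from Step 1 completes the proof.

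The only delicate point is the bookkeeping in Step 3: we must check that the weight $\rho_2(\tau^2)$ multiplying the $\tau^1$-return has mean one. This is what allows the single joint product of ratios to appear in front of the return difference. The rest is routine.
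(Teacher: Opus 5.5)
Your proof is correct and takes the same route as the paper: truncate at the effective horizon with tail error at most $\gamma^H/(1-\gamma)$, then apply importance sampling in which $d_0$ and the transition kernel cancel, leaving only the product of policy ratios. The only difference is cosmetic. The paper first writes the value difference as one expectation over the independent pair $(\tau^1,\tau^2)$ and changes measure on the product law in a single step, whereas you change measure trajectory by trajectory and make the mean-one property $\E_{\pi_\mathrm{ref}}[\rho_i]=1$ explicit, which is what the joint change of measure does implicitly.
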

The proof of this lemma can be found in Section~\ref{sec:obj_diff_proof}. Since the agent does not directly observe the reward $r(s_t,a_t)$, the algorithm relies on preference feedback. In particular, we use $D(\tau^1, \tau^2)$ as an approximation of  $\sum_{t=1}^H \gamma^{t-1}r(s^1_t,a^1_t) - \sum_{t=1}^H \gamma^{t-1}r(s^2_t,a^2_t).$ 
Note that our algorithm uses segments of length $L$ instead of full trajectories with length $H$ because a comparison of long trajectories could be time-consuming for the evaluators. For example, in autonomous driving, it is hard for a human evaluator to compare trajectories that are hours-long. 
In Section~\ref{sec:theory}, we show that under mild assumptions, the policy gradient estimated from segment preferences is the same as the policy gradient estimated from full trajectory preferences, up to a constant scaling factor. 

Now, given a set of segment pairs generated by the reference policy $\{(\sigma_n^1,\sigma_n^2)\}_{n=1}^{N_p}$, where $\sigma_n^1 = \{(s_{n,k}^1, a_{n,k}^1)\}_{k=1}^L$ and $\sigma_n^2 = \{(s_{n,k}^2, a_{n,k}^2)\}_{k=1}^L$, we can use the preference dataset to estimate the policy difference as:
\begin{align*}
    J_\mathcal{M}(\pi_{\theta_t}) - J_\mathcal{M}(\pi_{\theta_{t-1}}) \approx \frac{1}{N_p} \sum_{n=1}^{N_p} D(\sigma^1_n, \sigma^2_n) \prod_{k=1}^L \frac{\pi_{\theta_t}(a_{n,k}^1|s_{n,k}^1)}{\pi_\mathrm{ref}(a_{n,k}^1|s_{n,k}^1)} \frac{\pi_{\theta_{t-1}}(a_{n,k}^2|s_{n,k}^2)}{\pi_\mathrm{ref}(a_{n,k}^2|s_{n,k}^2)}, \label{eq:objective-diff-est}
\end{align*}
without using any trajectories generated by $\pi_{\theta_t}$ or $\pi_{\theta_{t-1}}$
To further facilitate stability, we plug this estimator into the PPO-type clip mechanism~\citep{schulman2017proximalpolicyoptimizationalgorithms, wu2023pairwiseproximalpolicyoptimization}, which leads to  
the loss function found in Equation~\eqref{eq:loss}. Our loss function generalizes the P3O \citep{wu2023pairwiseproximalpolicyoptimization} loss function, originally designed under the contextual bandit model, to the MDP setting with segments-level preferences.

{\bf Remark.} We may consider variants of this loss function. For example, a KL-penalty term between $\pi_{\theta}$ and $\pi_{\theta_\mathrm{ref}}$ would also be a reasonable choice for stability instead of clipping. 
This KL regularized loss function is shown below
\begin{align*}
    & \mathcal {L}_\mathrm{SP3O-KL}(\theta_t ; \theta_{t-1}, \theta_\mathrm{ref}, \mathcal{D}_\mathrm{ref}) \nonumber \\ = & \beta D_{\mathrm{KL}}(\pi_{\theta_t}\|\pi_{\theta_\mathrm{ref}})- \frac{1}{N_p} \sum_{n=1}^{N_p}
    D(\sigma^1_n,\sigma^2_n) \times \prod_{k=1}^L \left ( \frac{\pi_{\theta_t}(a_{n,k}^1|s_{n,k}^1)}{\pi_{\theta_\mathrm{ref}}(a_{n,k}^1|s_{n,k}^1)} \right ) \left (\frac{\pi_{\theta_{t-1}}(a_{n,k}^2|s_{n,k}^2)}{\pi_{\theta_\mathrm{ref}}(a_{n,k}^2|s_{n,k}^2)} \right ).
\end{align*}
In Section~\ref{sec:kl_experiments}, we present an empirical comparison of the two methods in simulated robotic control environments and show that KL regularization does not perform better than clipping in that setting. Furthermore, one may replace $\pi_{\theta_{t-1}}$ with any choice of policy $\pi$ sufficiently similar to $\pi_{\theta_\mathrm{ref}}$, such as $\pi_{\theta_\mathrm{ref}}$ itself or a perturbed version of $\pi_{\theta_{t-1}}$. SP3O uses $\pi_{\theta_{t-1}}$ with the intuition that the loss function should have less emphasis on behaviors that have already been learned or corrected. This freedom to choose which policy to compare against is a further generalization of P3O, which uses $\theta_{t-1} = \theta_t$. In Section~\ref{sec:second_policy_choice} we present empirical results replacing $\pi_{\theta_{t-1}}$ with $\pi_{\mathrm{ref}}$ itself and show our design indeed performs better than using $\pi_{\theta_\mathrm{ref}}$.

\section{Theoretical Analysis}\label{sec:theory}

This section presents two theoretical results. The first shows that under mild assumptions, the estimator bootstrapped from segments is an unbiased estimator of the gradient of the original objective in Equation~\eqref{eq:objective-diff}. The second quantifies the tradeoff in choosing $L,$ the segment length. 

\subsection{Policy Gradient Estimation from Trajectory Segments}
Given a policy $\pi_\mathrm{ref}$, we construct the following discounted finite horizon MDP $\mathcal{M'}$, which we call the \emph{segment MDP}. It models trajectory segments generated from $\pi_\mathrm{ref}$ as follows:
\begin{align} \mathcal{M}' = (\mathcal{S},\mathcal{A}, d_0', r_h', P, \gamma, L), \nonumber \text{ with }
r_h'(s,a) = \begin{cases}
r(s,a), & h \neq L; \\
Q_\mathcal{M}^{\pi_\mathrm{ref}}(s,a), & h = L. \nonumber
\end{cases}\end{align}
where $L$ is the trajectory-segment length and the initial distribution $d_0'(s)$ is the discounted occupancy measure defined as follows:
\begin{align*}
d_0'(s) = \left(1-\gamma^L\right)\sum_{k=0}^\infty \gamma^{kL} \prob\left[s_{kL + 1} = s \mid s_1 \sim d_0, \pi_\mathrm{ref}\right].
\end{align*}
The following theorem shows that given a reference policy close to $\pi_\theta$, the gradient of the expected policy value difference in $\mathcal{M'}$ is a good estimate for the gradient of the objective in $\mathcal{M}$. 

\begin{theorem}\label{thm:difference_theorem_statement}
If $\pi_{\mathrm{ref}} = \pi_\theta$, then for any policy $\pi',$ the following holds
\begin{align*}
    \nabla_\theta J_\mathcal{M}(\pi_\theta) = \frac{1}{1-\gamma^L} \nabla_{\theta}(J_{\mathcal{M'}}(\pi_\theta) - J_{\mathcal{M'}}(\pi')).
\end{align*} In other words, the gradient of the expected policy value in $\mathcal{M}$ is a scaled version of the gradient of the expected policy value difference in $\mathcal{M'}$. 
\end{theorem}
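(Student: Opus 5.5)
The plan is to reduce both sides to the classical policy gradient theorem and match them term by term. First I fix the interpretation. The segment MDP $\mathcal{M}'$ (its initial distribution $d_0'$ and its terminal reward $Q^{\pi_\mathrm{ref}}_\mathcal{M}$) depends on $\pi_\mathrm{ref}$, which is held fixed during differentiation; only afterwards do we set $\pi_\mathrm{ref}=\pi_\theta$. Since $\pi'$ does not depend on $\theta$, $\nabla_\theta J_{\mathcal{M}'}(\pi')=0$. The claim therefore reduces to
\[
\nabla_\theta J_{\mathcal{M}'}(\pi_\theta)\big|_{\pi_\mathrm{ref}=\pi_\theta}=(1-\gamma^L)\,\nabla_\theta J_\mathcal{M}(\pi_\theta).
\]

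Step 1 (policy gradient in $\mathcal{M}'$). Apply the finite-horizon, time-inhomogeneous, discounted policy gradient theorem to $\mathcal{M}'$:
\[
\nabla_\theta J_{\mathcal{M}'}(\pi_\theta)=\sum_{h=1}^L\gamma^{h-1}\,\E_{s_h\sim d'_h,\,a_h\sim\pi_\theta}\big[\nabla_\theta\log\pi_\theta(a_h|s_h)\,Q'_h(s_h,a_h)\big].
\]
Here $d'_h$ is the law of the $h$-th state of $\mathcal{M}'$ started from $d_0'$ under $\pi_\theta$, and $Q'_h$ is the stage-$h$ action value in $\mathcal{M}'$:
\[
Q'_h(s,a)=\E\Big[\sum_{k=h}^{L-1}\gamma^{k-h}r(s_k,a_k)+\gamma^{L-h}Q^{\pi_\mathrm{ref}}_\mathcal{M}(s_L,a_L)\Big].
\]

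Step 2 (identify $Q'_h$). Set $\pi_\mathrm{ref}=\pi_\theta$. Unrolling the Bellman equation $Q^{\pi}_\mathcal{M}(s,a)=r(s,a)+\gamma\,\E[Q^\pi_\mathcal{M}(s',a')]$ exactly $L-h$ times shows $Q'_h(s,a)=Q^{\pi_\theta}_\mathcal{M}(s,a)$ for every $h$. This is where the choice of the terminal reward is essential.

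Step 3 (identify the state distributions). Because $P$ and the policy are the same in both MDPs,
\[
d'_h(s)=(1-\gamma^L)\sum_{k\ge0}\gamma^{kL}\,\prob[s_{kL+h}=s\mid s_1\sim d_0,\pi_\theta].
\]
Substituting Steps 2 and 3 and writing $t=kL+h$, every $t\ge1$ is obtained exactly once. The double sum $\sum_{h=1}^L\sum_{k\ge0}\gamma^{kL+h-1}(\cdot)$ therefore collapses to
\[
(1-\gamma^L)\sum_{t\ge1}\gamma^{t-1}\,\E_{s_t\sim d^{\pi_\theta}_{\mathcal{M},t}}\big[\nabla_\theta\log\pi_\theta(a_t|s_t)\,Q^{\pi_\theta}_\mathcal{M}(s_t,a_t)\big].
\]
By the infinite-horizon policy gradient theorem for $\mathcal{M}$, this equals $(1-\gamma^L)\nabla_\theta J_\mathcal{M}(\pi_\theta)$.

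The main obstacle is bookkeeping rather than depth. One issue is to state precisely that $\pi_\mathrm{ref}$ is frozen when differentiating; otherwise extra terms from $\nabla d_0'$ and $\nabla Q^{\pi_\mathrm{ref}}$ would appear. The other is to justify exchanging the infinite sums with expectations and gradients. For the exchange, I will use $r\in[0,1]$, which gives $|Q|\le 1/(1-\gamma)$, together with an integrable bound on the score $\nabla\log\pi_\theta$; this yields absolute convergence and allows Fubini/dominated convergence.
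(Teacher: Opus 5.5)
Your proposal is correct and follows essentially the same route as the paper. Both arguments reduce to $\nabla_\theta J_{\mathcal{M}'}(\pi_\theta)$ with $\pi_\mathrm{ref}$ frozen (the paper's remark that $d_0'$ does not depend on $\theta$), expand it by the finite-horizon policy gradient theorem, identify the stage-wise $Q$-values with $Q^{\pi_\theta}_\mathcal{M}$ through the terminal reward (Lemma~\ref{lemma:q}), collapse the state distributions via the reindexing $t=kL+h$ (Lemma~\ref{lemma:occupancy}), and match the result with the policy gradient theorem in $\mathcal{M}$. The only difference is cosmetic: you reindex stage by stage inside the gradient sum rather than passing through the normalized occupancy $d^{\pi}_{\mathcal{M}'}$, which has the advantage of keeping the stage dependence of $Q'_h$ explicit.
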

The complete proof of the theorem is in Section~\ref{sec:theorem1proof}, which relies on two intermediate results. The first shows that with respect to $\pi_\mathrm{ref}$, the $Q$ function in the segment MDP $\mathcal{M'}$ equals the $Q$ function in the original MDP $\mathcal{M}$, i.e., $Q_{\mathcal{M'}}^{\pi_\mathrm{ref}}(s,a) = Q_{\mathcal{M}}^{\pi_\mathrm{ref}}(s,a)$ for all state-action pairs.
This is a direct result of the definition of the terminal reward of $\mathcal{M'}$ being the $Q$ function in $\mathcal{M}$. 
The second intermediate result shows that the discounted state occupancy measure of policy $\pi_\mathrm{ref}$ in the segment MDP $\mathcal{M'}$ is the same as the discounted state occupancy measure of $\pi_\mathrm{ref}$ in the original MDP $\mathcal{M}$. This is because the initial state distribution of $\mathcal{M'}$ is based on the discounted state occupancy of $\pi_\mathrm{ref}$ in $\mathcal{M}$.
The theorem's proof then expands each side of the equation with the policy gradient theorem \citep{sutton1998reinforcement}, and then interchanges the discounted state occupancy measures and $Q$ functions.

\subsection{Trade-offs in Choosing the Segment Length}\label{sec:L_analysis}

To understand the choice of segment length, we consider the segment MDP $\mathcal{M}'$ which generates trajectories of length $L$. 
Suppose that at each policy update we can collect $N$ total timesteps of segments, mimicking an oracle that views $N$ timesteps per update. This yields a dataset $\mathcal{D}$ of segments with size $|{\cal D}| = \frac{N}{2L}$. Assume $\mathcal{D} = \{\sigma^1, \sigma^2, \cdots, \sigma^{|{\cal D}|}\}$ and each $\sigma^n = \{s_k^n, a_k^n\}_{k=1}^L$ are independently sampled. We use the sample average to estimate the expected policy value in the segment MDP $J_{\mathcal{M'}}(\pi_\theta)$
as follows:
\begin{align*}
    \widehat{J}_{\mathcal{M'}}(\pi_\theta, \mathcal{D}) = \frac{1}{|\mathcal{D}|} \sum_{i=1}^{|\mathcal{D}|} \left(R(\sigma^i) + \gamma^{L-1} \widehat{Q}(s_L^i, a_L^i) \right).
\end{align*}
Given this, we have the following bound on the estimation error. 
\begin{proposition}\label{prop:L_theorem_statement}
With probability $1-\delta$, the estimation error is bounded as follows:
    \begin{align*}
        \left|\widehat{J}_{\mathcal{M'}}(\pi_\theta, \mathcal{D}) - J_{\mathcal{M'}}(\pi_\theta)\right| \leq \frac{1}{1-\gamma} \sqrt{\frac{L\log(2/\delta)}{N}} + \gamma^{L-1} \nu.
    \end{align*}
\end{proposition}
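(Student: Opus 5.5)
The plan is to split the error into a statistical part and a noise part using the triangle inequality. Write each summand as $X_i = R(\sigma^i) + \gamma^{L-1}Q_{\mathcal{M}}^{\pi_\mathrm{ref}}(s_L^i,a_L^i)$ plus the noise term $\gamma^{L-1}\xi_i$, where $|\xi_i|\le \nu$. Then
\begin{align*}
\left|\widehat{J}_{\mathcal{M'}}(\pi_\theta,\mathcal{D}) - J_{\mathcal{M'}}(\pi_\theta)\right| \le \left|\frac{1}{|\mathcal{D}|}\sum_{i=1}^{|\mathcal{D}|} X_i - J_{\mathcal{M'}}(\pi_\theta)\right| + \gamma^{L-1}\left|\frac{1}{|\mathcal{D}|}\sum_{i=1}^{|\mathcal{D}|}\xi_i\right|.
\end{align*}
The second term is at most $\gamma^{L-1}\nu$ deterministically, even when the noise is adversarial. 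This gives the additive $\gamma^{L-1}\nu$ in the statement.

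For the first term, I would argue that $\E[X_i] = J_{\mathcal{M'}}(\pi_\theta)$. The segments are drawn from $\mathcal{M'}$ under $\pi_\theta$, which we take to equal $\pi_\mathrm{ref}$ as in Theorem~\ref{thm:difference_theorem_statement}. The return in $\mathcal{M'}$ is the discounted sum of $r$ over the first $L-1$ steps plus $\gamma^{L-1} r'_L = \gamma^{L-1}Q^{\pi_\mathrm{ref}}_{\mathcal{M}}(s_L,a_L)$, which is exactly $X_i$ by definition of the segment MDP. Since the $\sigma^i$ are assumed independent, the $X_i$ are i.i.d. with this mean.

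Next I would bound the range of $X_i$. Since $r\in[0,1]$, we have $0\le R(\sigma)\le \sum_{k=1}^{L-1}\gamma^{k-1}$ and $0\le Q^{\pi_\mathrm{ref}}_{\mathcal{M}}\le 1/(1-\gamma)$. Hence
\begin{align*}
0 \le X_i \le \frac{1-\gamma^{L-1}}{1-\gamma} + \frac{\gamma^{L-1}}{1-\gamma} = \frac{1}{1-\gamma}.
\end{align*}
Hoeffding's inequality with $n=|\mathcal{D}|=N/(2L)$ samples in an interval of width $1/(1-\gamma)$ then gives, with probability $1-\delta$,
\begin{align*}
\left|\frac{1}{n}\sum_{i=1}^{n} X_i - J_{\mathcal{M'}}(\pi_\theta)\right| \le \frac{1}{1-\gamma}\sqrt{\frac{\log(2/\delta)}{2n}} = \frac{1}{1-\gamma}\sqrt{\frac{L\log(2/\delta)}{N}}.
\end{align*}
The last equality holds because $2n = N/L$. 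Adding the two pieces yields the claim.

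The main point needing care is the unbiasedness step. One must check that the sampling distribution of segments, with start states drawn from $d_0'$, matches the trajectory distribution of $\mathcal{M'}$. One must also check that the exact-$Q$ part of the terminal reward coincides with $r'_L$ under the convention that the final step is discounted by $\gamma^{L-1}$. Once the independence assumption stated before the proposition is granted, the remaining steps are routine. The range bound conveniently telescopes to exactly $1/(1-\gamma)$, which is why no extra constant appears.
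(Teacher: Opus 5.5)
Your proposal is correct and follows the paper's proof essentially step for step. Like the paper, you split $\widehat{J}_{\mathcal{M'}}$ into the i.i.d.\ returns $R(\sigma^i)+\gamma^{L-1}Q^{\pi_\mathrm{ref}}_{\mathcal{M}}(s_L^i,a_L^i)$, which lie in $[0,1/(1-\gamma)]$ and are controlled by Hoeffding with $|\mathcal{D}|=N/(2L)$, plus a deterministic $\gamma^{L-1}\nu$ bound on the averaged noise. One small point: you do not need $\pi_\theta=\pi_\mathrm{ref}$. The terminal reward of $\mathcal{M}'$ is $Q^{\pi_\mathrm{ref}}_{\mathcal{M}}$ whichever policy generates the segments, so the summands have mean $J_{\mathcal{M'}}(\pi_\theta)$ in any case.
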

The proposition is obtained by applying Hoeffding's inequality to the sample average estimate, and the complete proof is in Section~\ref{sec:L_proof}. It can be observed that if the segment length $L$ is too short, e.g., $L=1$, then we suffer a constant estimation error $\nu$ resulting from the inaccurate human estimation of the final state $Q$ function. This reflects our assumption that the preference oracle has an incomplete understanding of both the policy and the environment, so they cannot tell the agent directly which action is best at each state. On the other hand, when $L$ increases, the preference oracle error $\xi$ gets discounted, but the variance of the partial reward increases with a smaller number of segments $|\mathcal{D}|= \left \lfloor \frac{N}{2L}  \right \rfloor$. This results in a higher error in evaluating the policy from segments. The best segment length $L$ strikes a balance between the two sources of error. 
In the next section, we show that the trade-off indeed appears in practice with intermediary values of $L$ performing best in certain environments.

\section{Experiments}

This section compares SP3O to other reward-model-free PbRL/RLHF algorithms in a suite of robotic control tasks from the Gymnasium benchmark \citep{towers2024gymnasium}, and an LLM finetuning task. In the robotic control tasks, we compare SP3O to Online DPO \citep{guo2024direct}, P3O \citep{wu2023pairwiseproximalpolicyoptimization}, and ZPG \citep{zhang2025zeroth}. In the LLM finetuning task, we compare SP3O to Online DPO and P3O. 

\subsection{Robotic Control Tasks}

\textbf{Experiment Details.} We compare SP3O with three different values of the segment length ($L=5,20,50$) to Online DPO, P3O, and ZPG. Each algorithm is used to train a randomly initialized policy for 100 updates. At each policy update, all algorithms sample 10 trajectories from the environment. This yields 45 trajectory pairs for Online DPO and P3O, and 25 for ZPG. For SP3O, we randomly sample segment pairs so that the total number of timesteps in the pairs is the same as if trajectory pairs had been used. This ensures that SP3O is not giving more information to the preference oracle than Online DPO or P3O. 

For each pair of segments/trajectories $(\sigma^1,\sigma^2)$ or $(\tau^1,\tau^2)$ we simulate human preference as follows. We first use the real environmental reward to compute $\prob[\sigma^1 \succ \sigma^2]$ or $\prob[\tau^1\succ\tau^2]$. For SP3O, we set the $\hat{Q}$ term equal to the discounted reward sum of the remainder of the sampled rollout. For all algorithms we use the process described in Algorithm~\ref{alg:SP3O} with $M=50$ to obtain an estimate $\hat{\prob}[\sigma^1 \succ \sigma^2]$ or $\hat{\prob}[\tau^1\succ \tau^2]$ of the human preference. Further details are in Section~\ref{sec:implementation_details_rob}. 

\textbf{Performance Comparison.} We compare each algorithm's final policy value with the environment's time horizon. We normalize the final policy value with the time horizon, reporting average reward per timestep. Because we are showing how performance varies with time horizon, we exclude the MuJoCo environments where termination before the horizon is reached is very common. The results, averaged across 100 seeds per point, are shown in Figure~\ref{fig:main_results}, where the area between the error bars represents 95\% confidence intervals. 

\begin{figure}[htbp]
    \centering
    \begin{subfigure}[b]{0.32 \linewidth}
        \centering
        \includegraphics[width=1.0\textwidth]{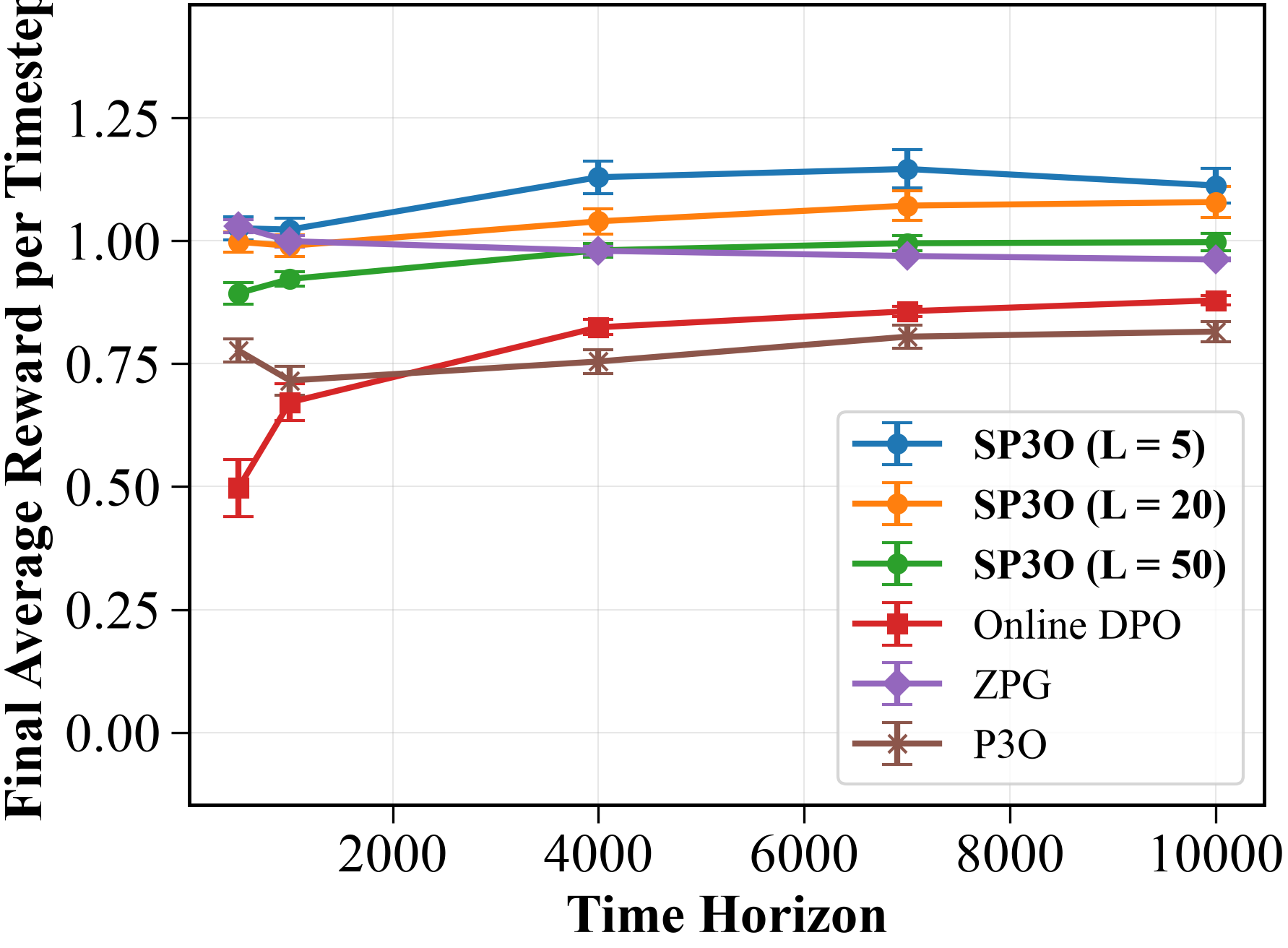} 
        \caption{Ant-v5}
    \end{subfigure}
    \hfill
    \begin{subfigure}[b]{0.32 \linewidth}
        \centering
        \includegraphics[width=1.0\textwidth]{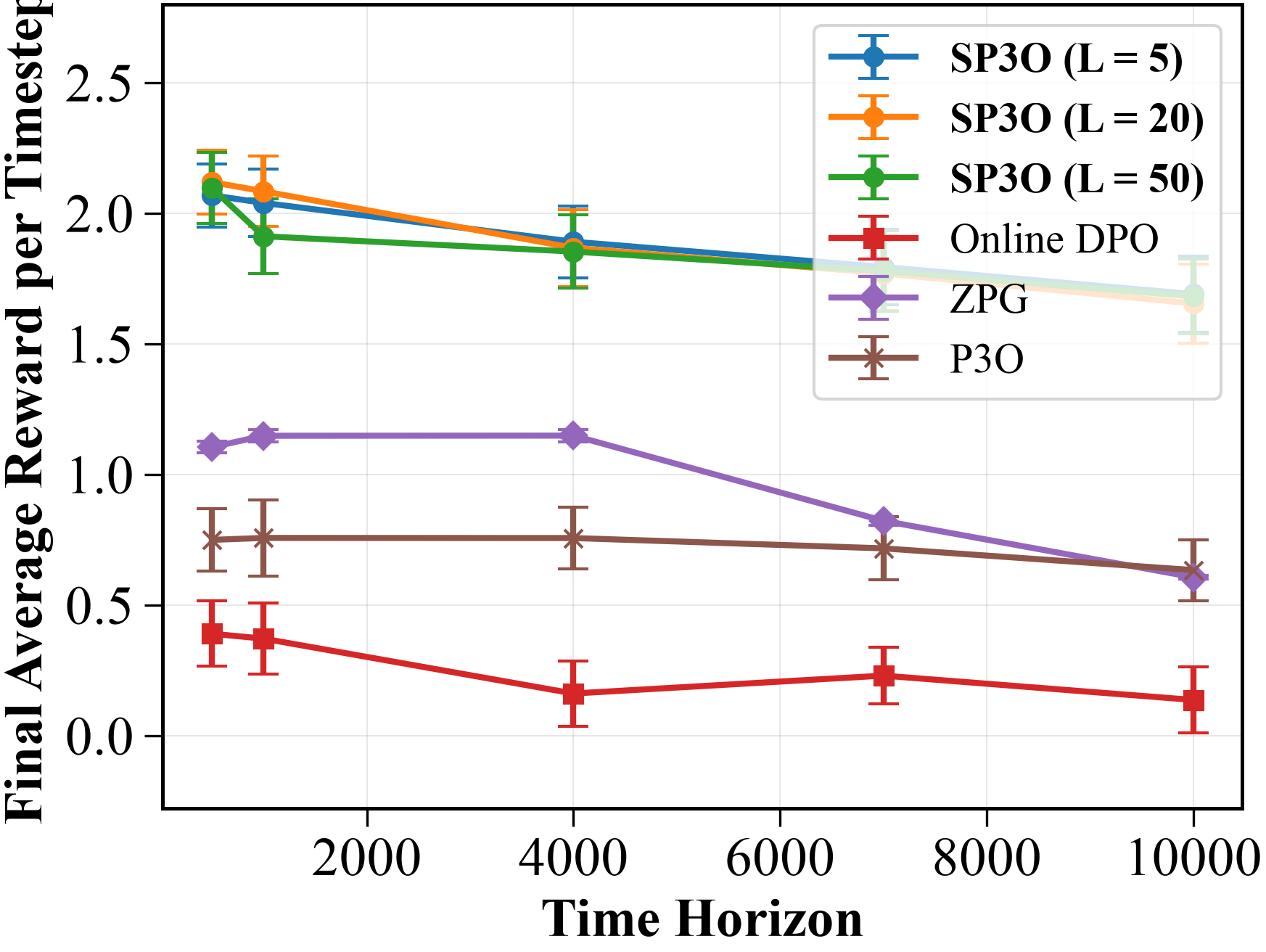} 
        \caption{HalfCheetah-v5}
    \end{subfigure}
    \hfill
    \begin{subfigure}[b]{0.32 \linewidth}
        \centering
        \includegraphics[width=1.0\textwidth]{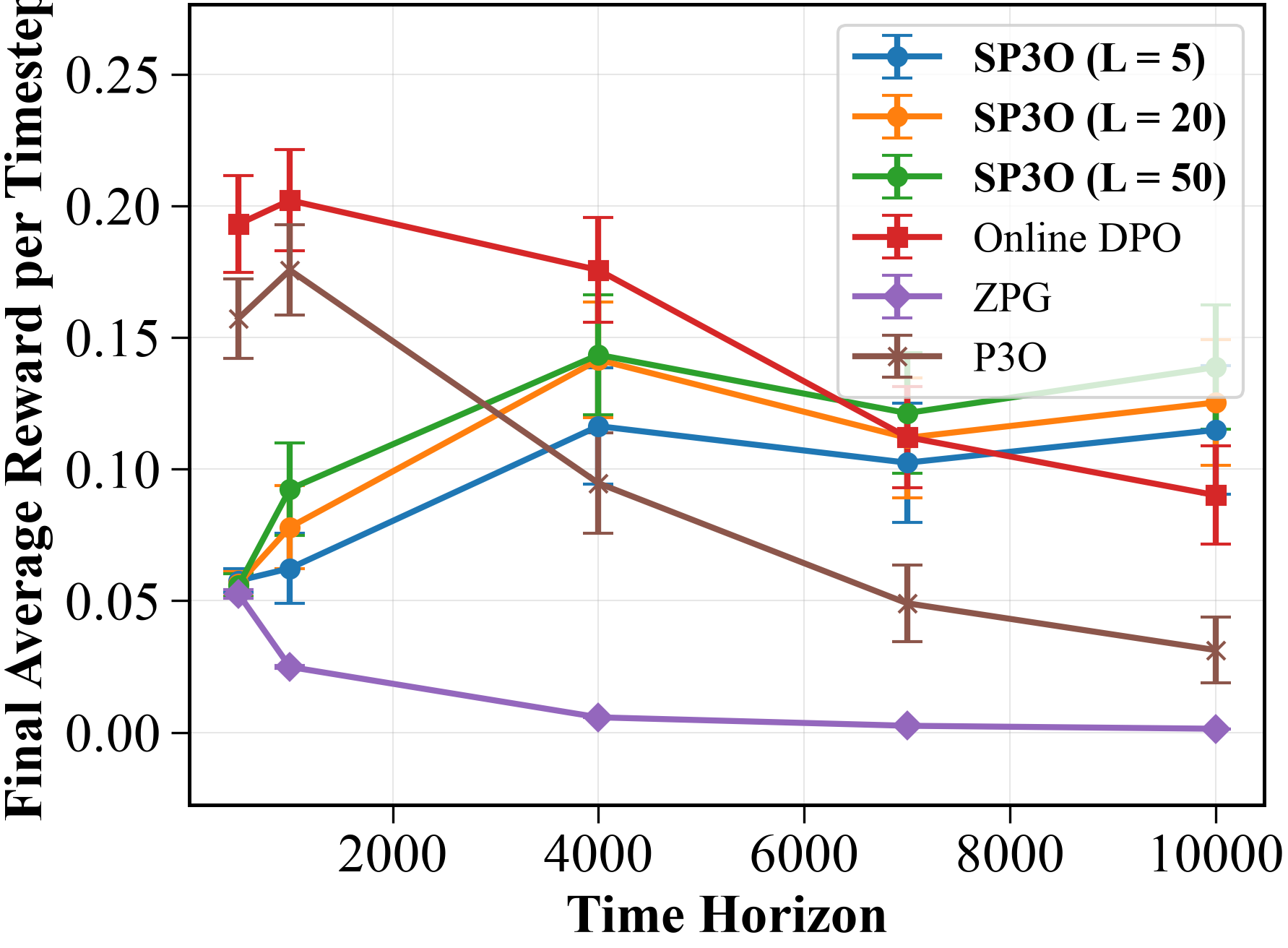} 
        \caption{Swimmer-v5}
    \end{subfigure}
    \caption{Final Average Reward per Timestep versus Time Horizon in MuJoCo Environments.}
    \label{fig:main_results}
\end{figure}

In Ant-v5, SP3O and Online DPO improve as the time horizon grows, P3O stays flat, and ZPG degrades. In HalfCheetah-v5, SP3O and P3O's performance stays constant while Online DPO and ZPG's decays. In Swimmer-v5, SP3O improves with horizon length, while Online DPO, P3O, and ZPG all degrade. Overall, this suggests that SP3O's segment preferences become more effective as the horizon grows, while the performance of trajectory preferences in the other algorithms degrades. Furthermore, in all environments SP3O with at least one value of $L$ outperforms the other algorithms at the longest time horizon.

\textbf{Segment Length.} We also empirically show the trade-off of segment length $L$ discussed in Section~\ref{sec:L_analysis}. We use the same setup as the previous experiment, with the time horizon set to 1000 in HalfCheetah-v5, and vary the trajectory segment length $L$ and the number of timesteps that can be sent to the oracle, which is equivalent to $N$ in Proposition~\ref{prop:L_theorem_statement}. In Figure~\ref{fig:L_ablation_results} we show, for each of six oracle budgets ($2^{-4}$ to $2^1$ times the $N$ in the main experiment), the final policy value at $L=5,10,\dots,40$ relative to the seed mean, averaged over 4000 seeds. The best performing $L$s in each facet are starred, and the bars are 95\% confidence intervals. 
We observe that as we increase $N$ the best values of $L$ increase monotonically, which is consistent with Proposition~\ref{prop:L_theorem_statement}. A larger $N$ shrinks the first error term $\frac{1}{1-\gamma} \sqrt{L\log(2/\delta)/N}$, so a large $L$ can be afforded to drive down the constant error term $\gamma^{L-1}\nu$ stemming from the oracle's estimate of the $Q$ function.

\begin{figure}[htbp]
    \centering
    \includegraphics[width=\linewidth]{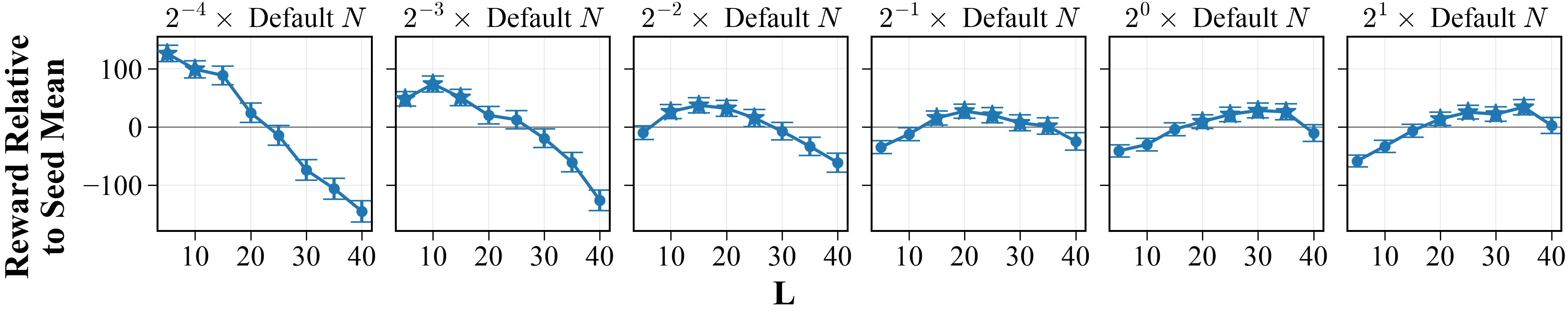}
    \caption{Ablation Study of $L$ and $N$ in HalfCheetah-v5.}
    \label{fig:L_ablation_results}
\end{figure}

\subsection{LLM Finetuning Task}

\textbf{Experiment Details.} In this experiment, we compare SP3O to Online DPO \citep{guo2024direct} and P3O \citep{wu2023pairwiseproximalpolicyoptimization} in an LLM finetuning task. We evaluate both a clipping and KL divergence regularized version of SP3O. For our base model we use GPT-J 6B \citep{gpt-j}, and finetune a rank 16 LoRA \citep{hu2022lora} adapter. The goal of the task is to minimize the toxicity of the language model's outputs while keeping the KL divergence between the fine-tuned and base models small. We use the Real Toxicity Prompts \citep{gehman2020realtoxicityprompts} dataset to prompt the base model to have a toxic output. We measure toxicity with the pretrained Detoxify \citep{Detoxify} model that outputs toxicity values between 0 and 1.

During each training step, each algorithm generates 2 continuations with the current policy (with max length of 64 tokens) for each of 8 randomly selected prompts. For Online DPO and P3O these continuations are turned directly into 8 pairs. To create segments for SP3O, we split each continuation into its sentences. We then randomly pair these sentences such that each sentence is used at most once, except when there are an odd number of sentences, in which case one sentence will be in two pairs. We note that a sentence can be paired with another from the same continuation. With this setup, an evaluator will have to view a similar number of tokens at each training step regardless of whether sentences or full continuations were used. A sentence/continuation is considered preferred over another if the Detoxify score for the first sentence/continuation is at least $0.05$ lower than the score for the other. All pairs that meet this threshold are then passed to the respective loss function. Further details and hyperparameters are in Section~\ref{sec:implementation_details_llm}.

\textbf{Performance Comparison.} Shown in Figure~\ref{fig:LLM_results} are the results after running all three algorithms for 2000 steps with checkpoints evaluated every 50 steps, with 5 different seeds. At each checkpoint we report the mean toxicity on a held-out challenging subset of the prompts compared to the reverse KL divergence $D_{\mathrm{KL}}(\pi \| \pi_{\mathrm{base}})$ between the current and base models. We approximate the challenging mean toxicity versus KL curve with a monotonic PAVA regression. The shaded areas represent 95\% confidence intervals, and we also show a zoomed version of this plot.
We observe that both versions of SP3O consistently outperform Online DPO, and that the performance of SP3O and P3O are very similar. However, in the next section we show that growing the horizon negatively effects P3O and positiely effects SP3O.

\begin{figure}[htbp!]
    \centering
    \begin{subfigure}[b]{0.31 \linewidth}
        \centering
        \includegraphics[width=\linewidth]{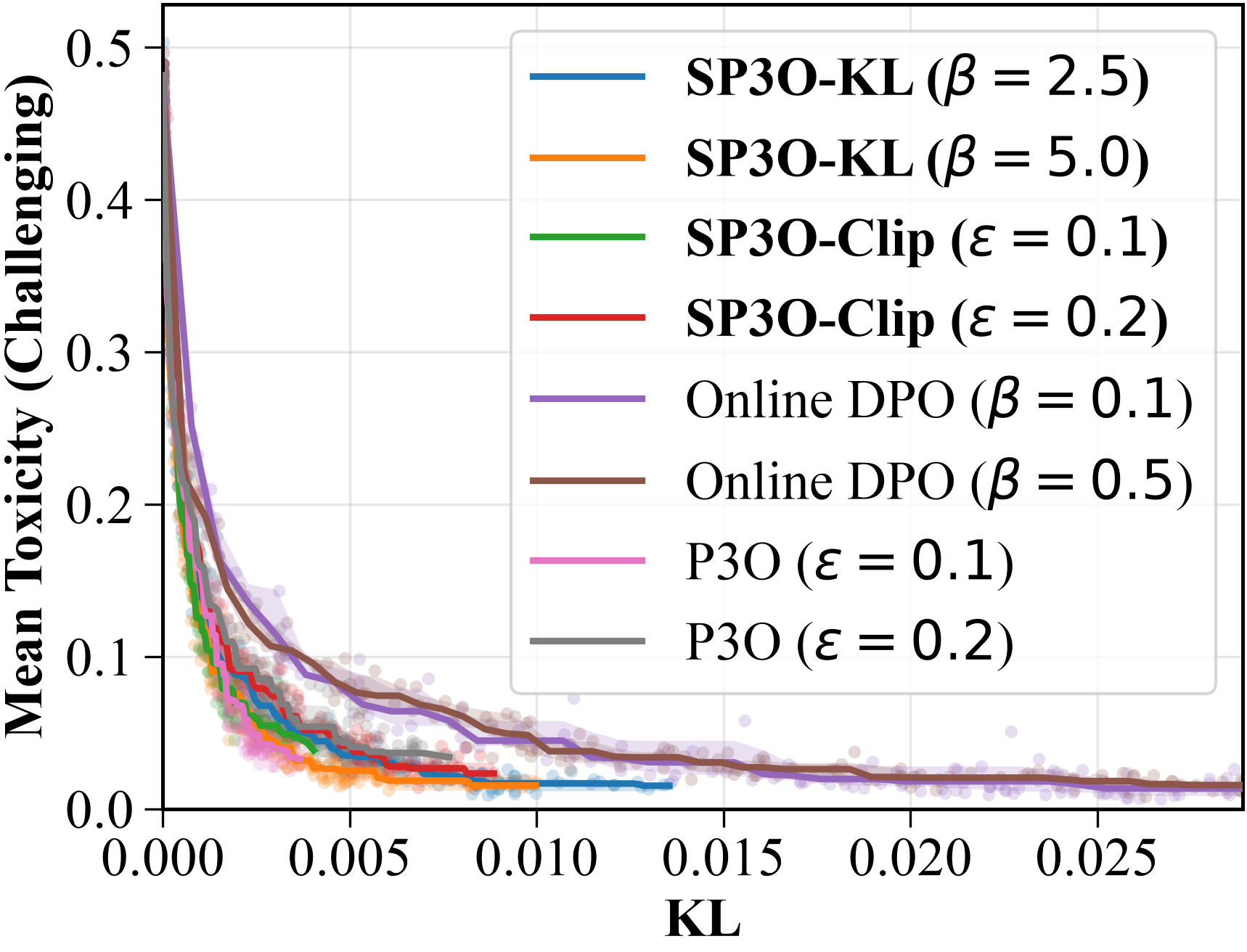}
    \end{subfigure}
    \begin{subfigure}[b]{0.33 \linewidth}
        \centering
        \includegraphics[width=\linewidth]{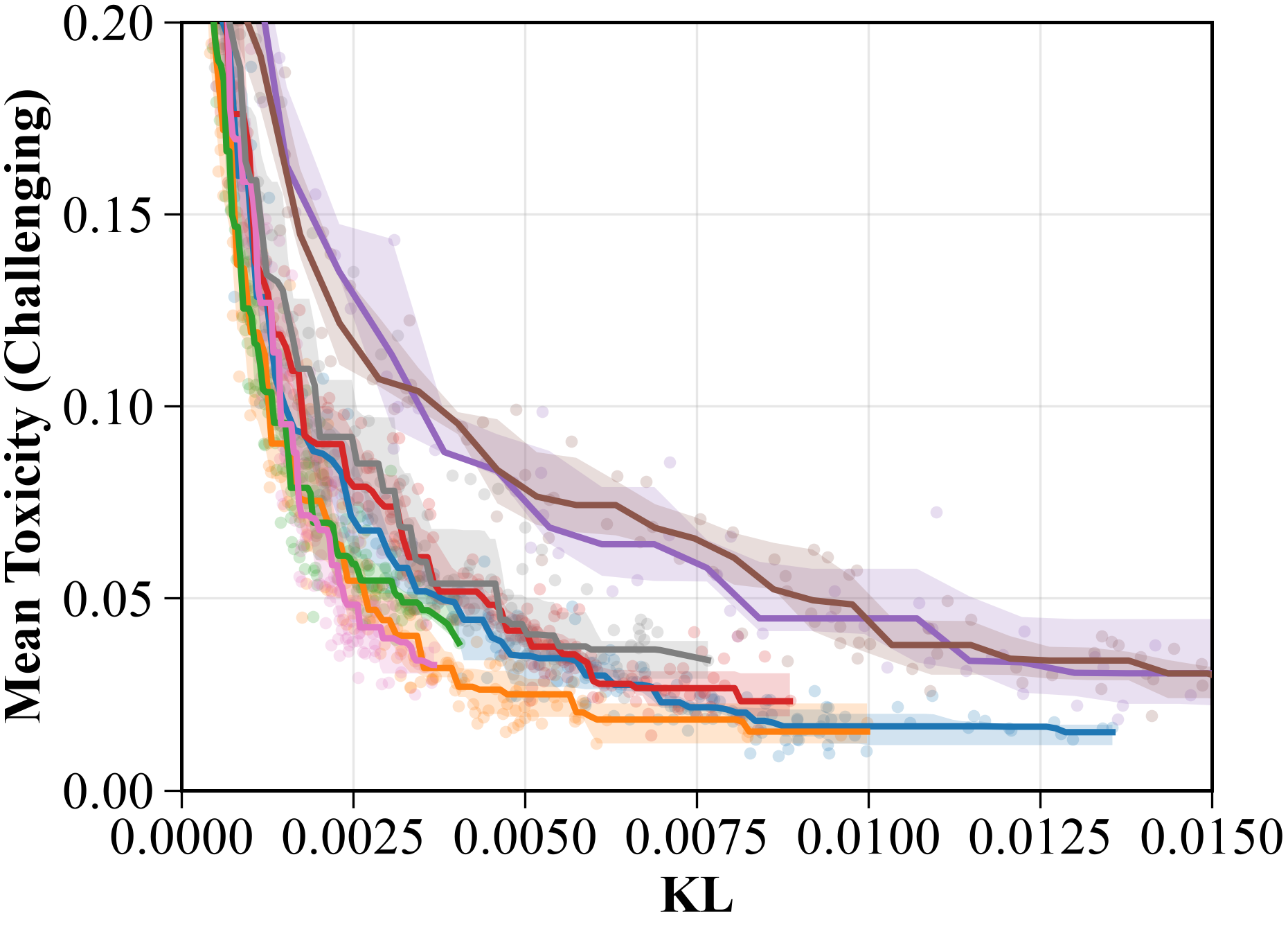}
    \end{subfigure}
    \caption{LLM Finetuning Results.}
    \label{fig:LLM_results}
\end{figure}

\textbf{Continuation Length's Effect on Performance.} We also examine how the number of tokens generated in each online continuation (TPC) affects these algorithms. In this setting, TPC can be thought of as the time horizon. Shown in Figure~\ref{fig:LLM_length} are the results when SP3O, P3O, and Online DPO are used on this task with 64, 128, and 256 tokens per continuation (TPC) averaged over 5 seeds. To make the results comparable all checkpoints were evaluated with 64 tokens per continuation. 

\begin{figure}[htbp]
    \centering
    \includegraphics[width=\linewidth]{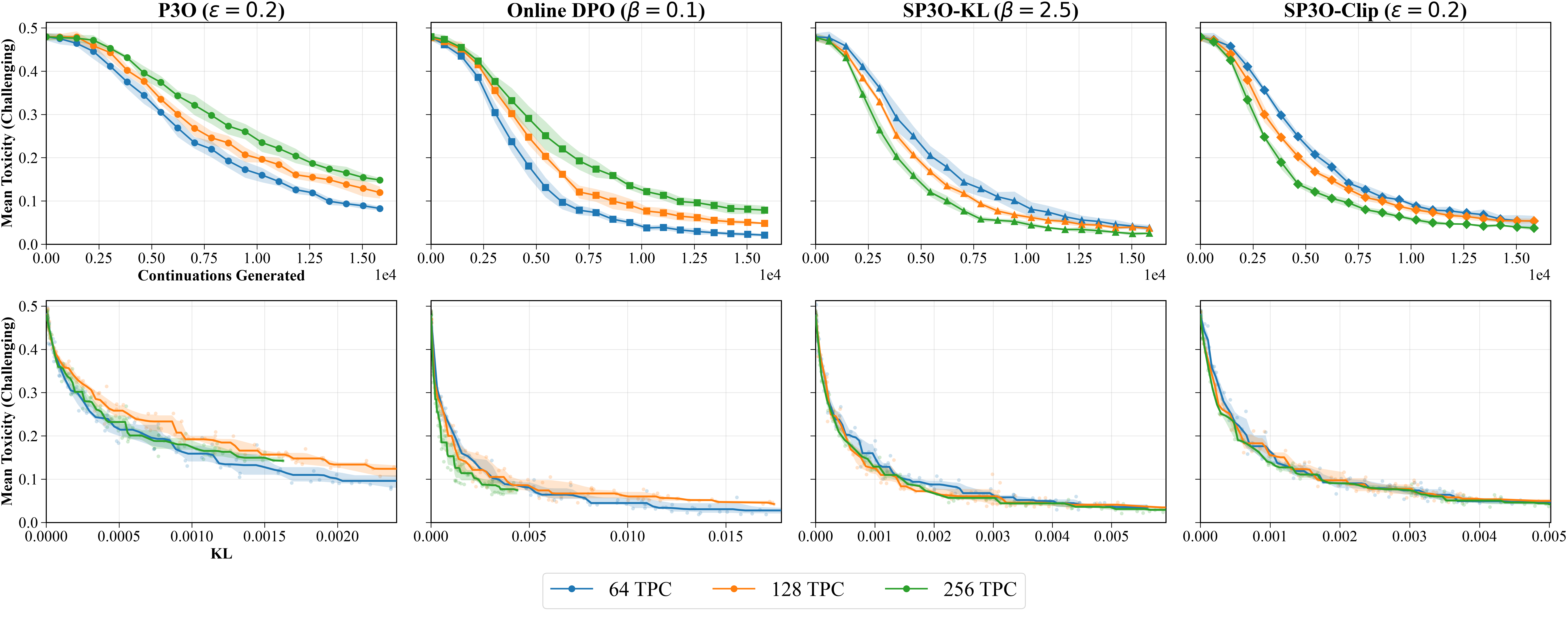}
    \caption{Comparison of Continuation Length.}
    \label{fig:LLM_length}
\end{figure}

We observe that with more tokens per continuation SP3O achieves a low toxicity score quicker. However as TPC increases, Online DPO and P3O exhibit a slower decrease in toxicity. This is despite all algorithms maintaining similar toxicity versus KL curves. As TPC increases more pairs are created for SP3O, despite the total number of tokens in the pairs being the same as if Online DPO or P3O were used. This suggests that using segment preferences becomes more efficient than trajectory preferences as we increase the horizon length.

\section{Conclusion}

In this paper, we introduced SP3O, a novel reward-model-free and critic-free PbRL algorithm applicable to general MDPs with preferences based on trajectory segments. 
We analyzed the use of trajectory segments, showing the unbiasedness of the gradient estimator, and discovered a trade-off in segment lengths. We evaluated SP3O experimentally against other reward-model-free algorithms in a suite of robotic control tasks, and a LLM finetuning task.

\textbf{Limitations and Future Directions.} Like other non-reward modeling PbRL/RLHF algorithms, SP3O requires extensive preference queries throughout the entire training process. Thus, its applications are restricted to scenarios where feedback is easy to obtain. To mitigate this limitation, a rule-based oracle or AI feedback \citep{guo2024direct} could be used. A preference model such as a preference transformer \citep{kim2023preference} or an LSTM preference model \citep{arjona2019rudder, early2022non} could also be used to lower the amount of human feedback required, although this approach may also suffer from issues similar to those of a reward model. Another option to avoid this limitation is to actively select queries that are likely to result in a high reward difference, such as similar strategy was used by \citet{christiano2017deep}, where multiple reward models were trained in parallel, and segments with high disagreement between the models would be selected for human evaluation.

\section*{AI Use Disclosure} AI tools were used in the creation of this work only for writing portions of the experimental code and final polish and proofreading of the text.

\bibliography{references}
\bibliographystyle{unsrtnat}

\newpage
\appendix

\section*{Appendix Table of Contents}
\startcontents[chapter]
\printcontents[chapter]{l}{0}{\setcounter{tocdepth}{2}}

\section{Related Work}\label{sec:related_work}

\subsection{Preference-Based RL.} In the general MDP context, learning from preferences has long been studied under the umbrella of preference-based RL (PbRL) \citep{akrour2011preference, wilson2012bayesian, wirth2017survey}. In PbRL, not only trajectories and trajectory segments can be used for feedback, but also states and actions themselves \citep{wirth2017survey}. Many of the methods prior to PPO+RM \citep{christiano2017deep} are based on restrictive assumptions, including discrete state/action spaces, a fully parametric policy, or a known transition kernel. This makes them inapplicable to many modern RL problems. 

\subsection{Comparison of Preference Models.} For two segments $\sigma^1$ and $\sigma^2$ with length $L$, most existing works on PbRL/RLHF~\citep{christiano2017deep, sadigh2017active, ibarz2018reward} assume the oracle's preferences are based on the undiscounted return of the trajectory segment alone (partial return model), which is useful when training a reward model given the step-wise reward decomposition form. Given two segments $\sigma^1 = \{s_t^1,a_t^1\}_{t=1}^L$ and $\sigma^2 = \{s_t^2,a_t^2\}_{t=1}^L$. The preference is expressed as:
\begin{align*}
\prob_\mathrm{partial}[\sigma^1 \succ \sigma^2] = \frac{\exp \left (\sum_{t=1}^L r(s_t^1,a_t^1) \right )}{\exp \left (\sum_{t=1}^L r(s_t^1,a_t^1) \right ) + \exp \left (\sum_{t=1}^L r(s_t^2,a_t^2) \right )}
\end{align*}
This kind of equation is useful when training a reward model, as it can be directly estimated by the reward model for each segment pair.
However, it has also been challenged~\citep{knox2022models} as not accurately representing how real human preferences are generated. Specifically, even if the partial returns of two segments are the same, humans could still prefer one trajectory segment based on the value of the last state and action. For example, in goal-reaching tasks, an ending state closer to the goal is usually more preferred than a farther ending state even if the partial returns are the same. To accommodate this modeling flaw, \citet{knox2022models} proposed the regret-based preference model, which assumes preference oracles could make decisions based on the cumulative advantage function of each state-action pair in the segment. Specifically, \citet{knox2022models} defines the regret of a segment as:
\begin{align*}
\texttt{regret}(\sigma^1) = \sum_{t=1}^{L} V^{\pi^*}(s_t^1) - Q^{\pi^*}(s_t^1,a_t^1) = \sum_{t=1}^L-A^{\pi^*}(s_t^1,a_t^1),
\end{align*}
where $A$ is the advantage function. Assuming the Bradley-Terry model, we get the following equation for preference:
\begin{align*}
\prob_\mathrm{regret}[\sigma^1 \succ \sigma^2] = \frac{\exp \left (\sum_{t=1}^L-A^{\pi^*}(s_t^1,a_t^1)\right )}{\exp \left (\sum_{t=1}^L-A^{\pi^*}(s_t^1,a_t^1) \right ) + \exp \left (\sum_{t=1}^L-A^{\pi^*}(s_t^2,a_t^2) \right )}.
\end{align*}
The main assumption of the regret model is that the evaluators know or can make an estimate of the advantage function. This means that the human not only has an unknown latent reward function, but also has an understanding of all the environmental transition dynamics and thus can estimate this function. However, in more complicated environments, this assumption could break down.

Our model is between the partial return and regret reward models. Our model does include a version of the $Q$ function, eliminating the main problem with the partial reward model presented by \citet{knox2022models}. However, our model only assumes that the human has an understanding of the transition dynamics that arise from the policy it is evaluating. This could potentially be a much smaller subset than the knowledge of transition dynamics assumed by the regret model. Furthermore, when viewing the segments, it is possible for the human to gain this understanding and extrapolate to future states and actions. Thus, our model is able to avoid the disadvantages of the partial reward model without all the assumptions of the regret model.

\subsection{Use of Trajectory Segments outside of PbRL/RLHF.}
As previously discussed, trajectory segments are commonly used in PbRL/RLHF for MDPs \citep{christiano2017deep, ibarz2018reward, knox2022models, hejna2024contrastivepreferencelearninglearning}. However, their efficacy over full-length trajectories has not been justified or theoretically examined. Outside of PbRL/RLHF, \citet{du2025reinforcement} studies a setting where trajectories can be split into segments and rewards are delayed until the ends of each segment. 

\section{Additional Experiments}

\subsection{KL Regularization vs Clipping}\label{sec:kl_experiments}

In this section, we compare the version of SP3O that uses PPO-style clipping for stability versus a version that uses KL regularization and a version that uses no regularization. Results comparing these two versions of the algorithm are shown in Figure \ref{fig:kl_study}, with the time horizon set to 1000. The curves are smoothing with a sliding window of 50 updates. We observe that in Ant-v5 and HalfCheetah-v5 clipping outperforms both KL and no regularization. In Swimmer-v5, all are comparable considering the confidence intervals. 

\begin{figure}[htbp]
    \centering
    \begin{subfigure}[b]{0.32 \linewidth}
        \centering
        \includegraphics[width=\linewidth]{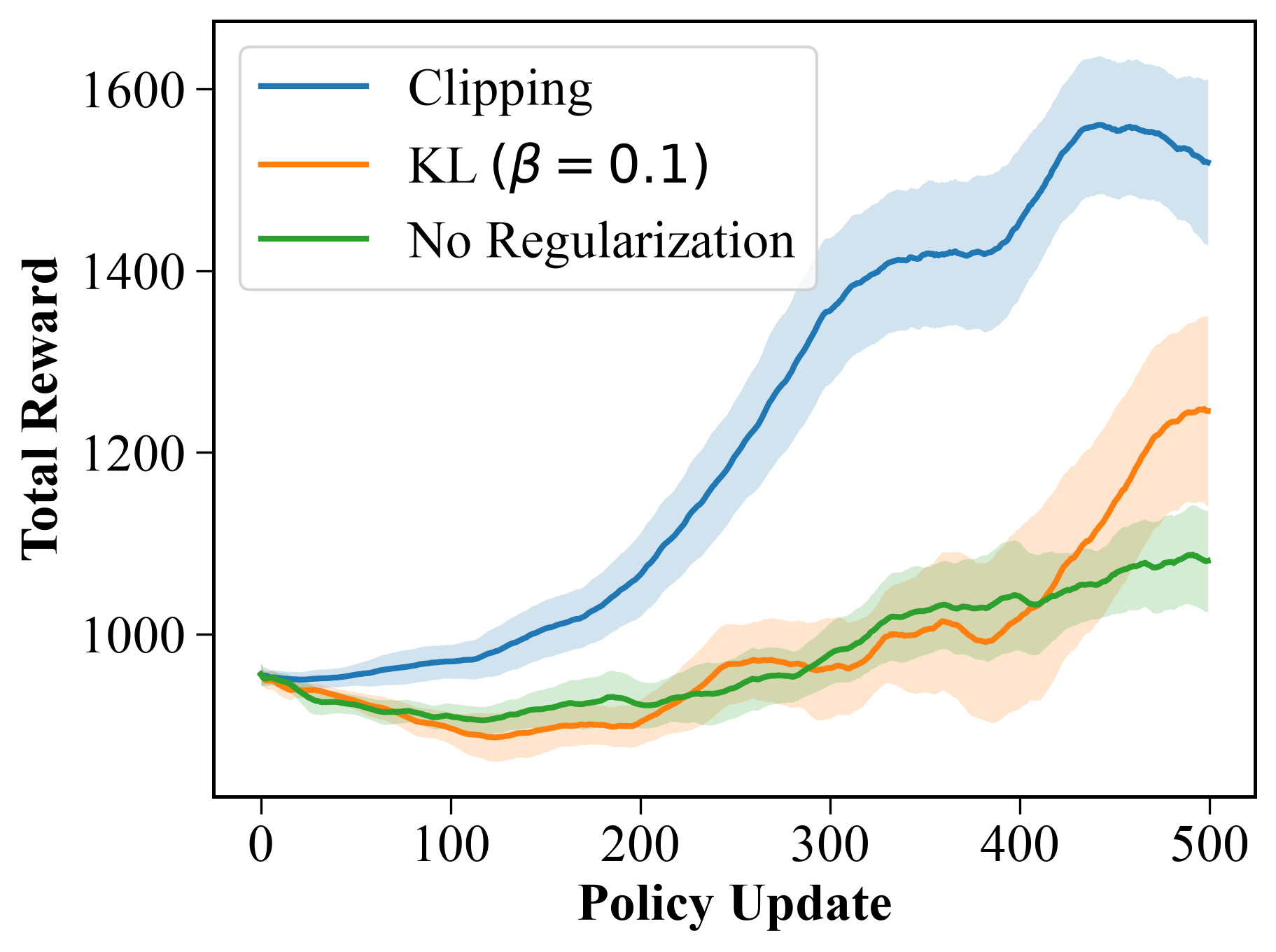} 
        \caption{Ant-v5}
    \end{subfigure}
    \hfill
    \begin{subfigure}[b]{0.32 \linewidth}
        \centering
        \includegraphics[width=\linewidth]{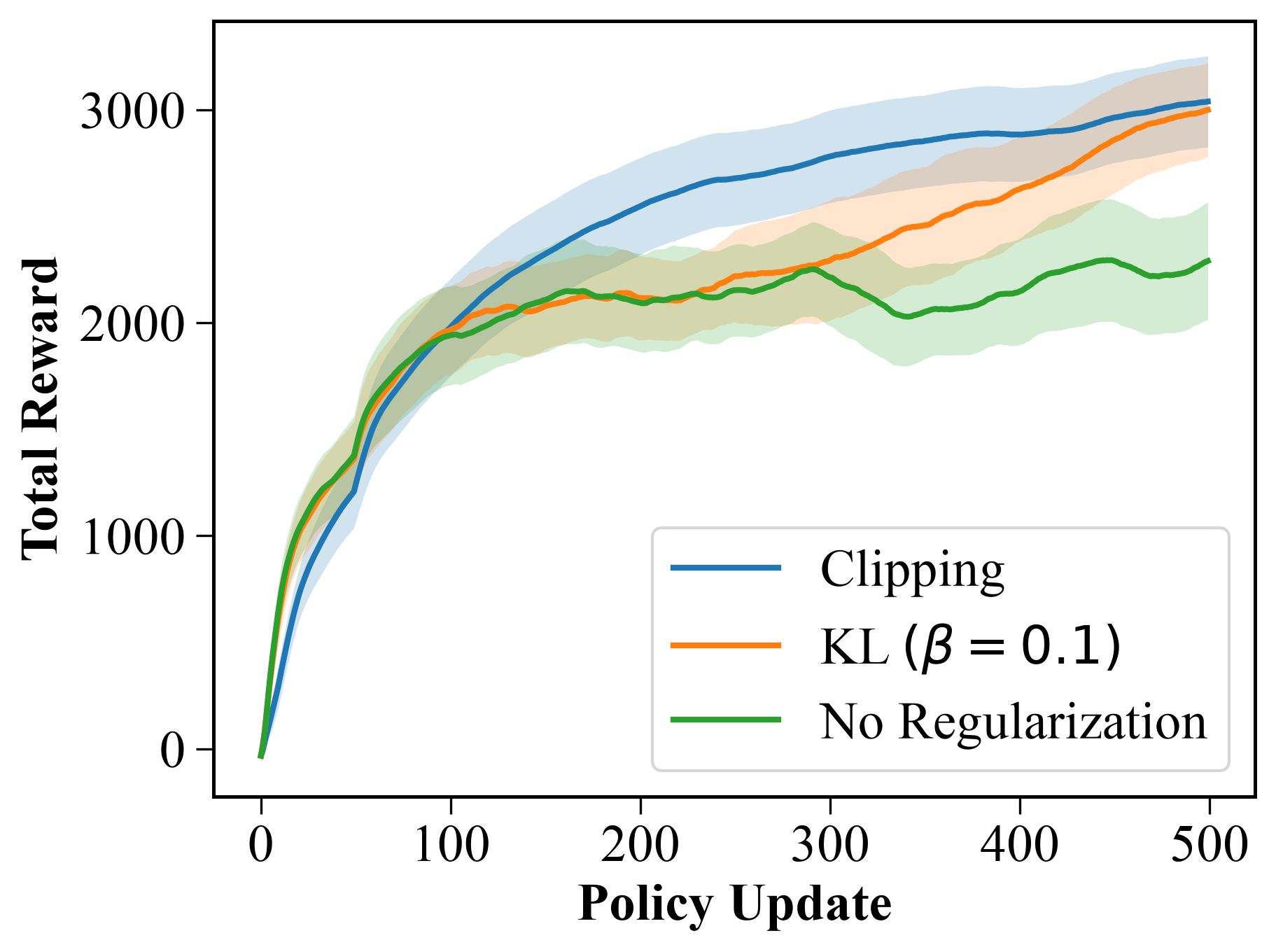} 
        \caption{HalfCheetah-v5}
    \end{subfigure}
    \hfill
    \begin{subfigure}[b]{0.32 \linewidth}
        \centering
        \includegraphics[width=\linewidth]{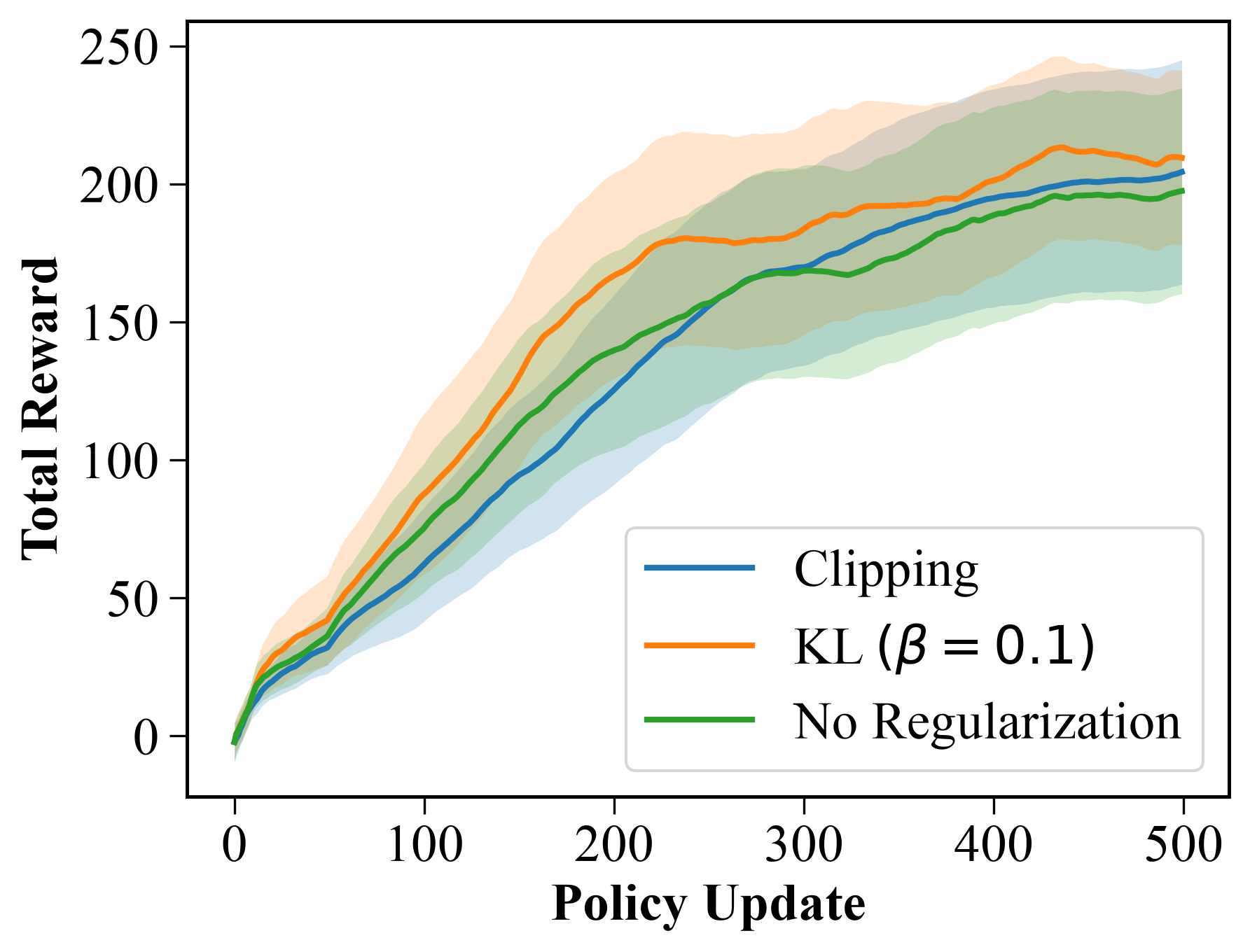} 
        \caption{Swimmer-v5}
    \end{subfigure}
    \hfill
    \caption{Ablation Study of Clipping versus KL Regularization. Shaded areas represent 95\% confidence intervals.}
    \label{fig:kl_study}
\end{figure}

\subsection{Alternate Choices of Second Policy in Loss Function}\label{sec:second_policy_choice}

In this section, we empirically compare the version of SP3O that uses $\pi_{\theta_{t-1}}$ as the second policy in the loss function to the version that uses $\pi_{\theta_\mathrm{ref}}$ as the second policy in the loss function. The training curves the two versions of the algorithm are shown in Figure~\ref{fig:two_is_ref_study}, with the time horizon set to 1000. The curves are smoothing with a sliding window of 50 updates. We observe that in Ant-v5 using $\pi_{\theta_{t-1}}$ as the second policy out performs setting the second policy to $\pi_{\theta_\mathrm{ref}}$. In the other environments, the two variants perform comparably.

\begin{figure}[htbp]
    \centering
    \begin{subfigure}[b]{0.32 \linewidth}
        \centering
        \includegraphics[width=\linewidth]{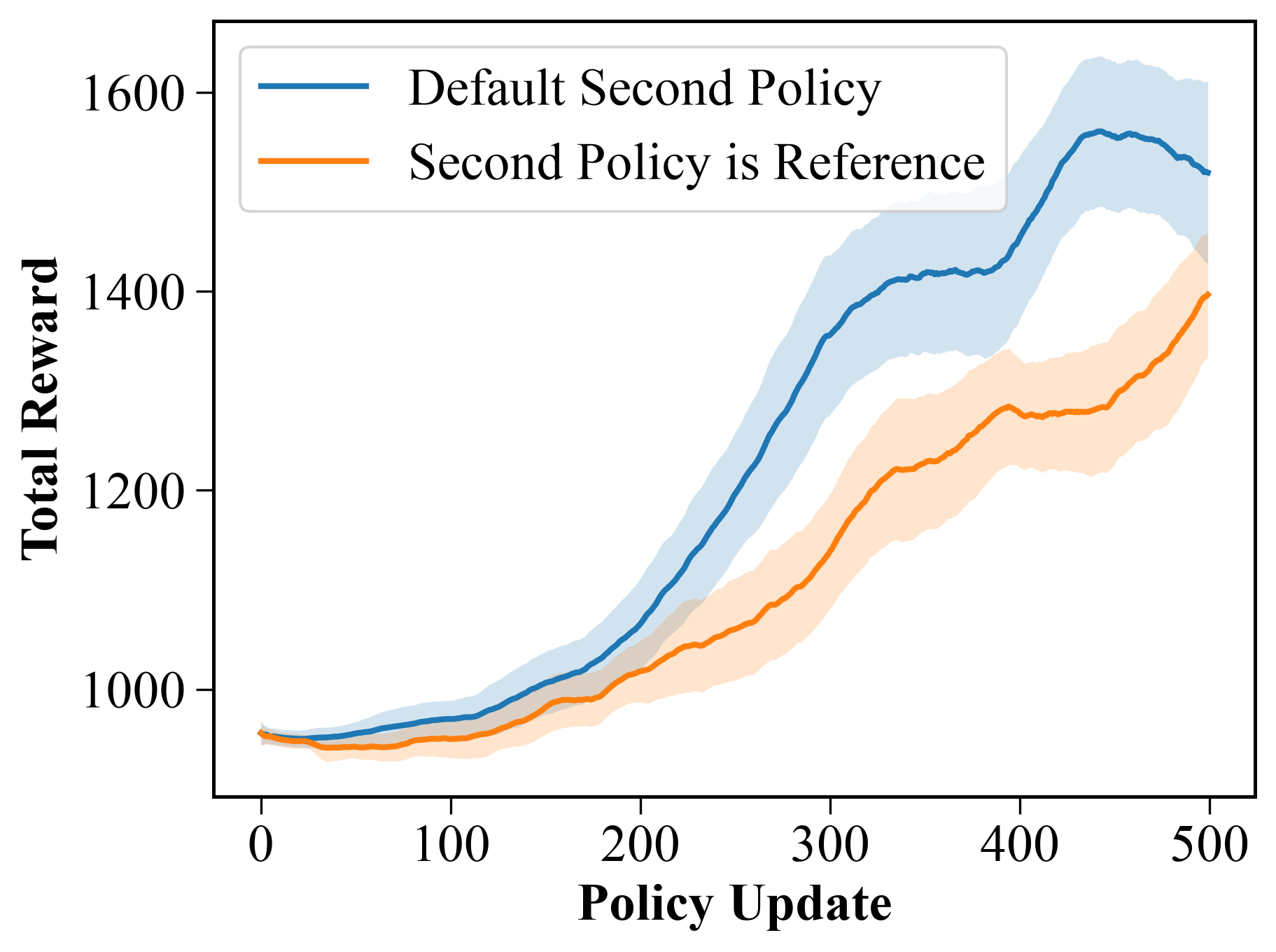} 
        \caption{Ant-v5}
    \end{subfigure}
    \hfill
    \begin{subfigure}[b]{0.32 \linewidth}
        \centering
        \includegraphics[width=\linewidth]{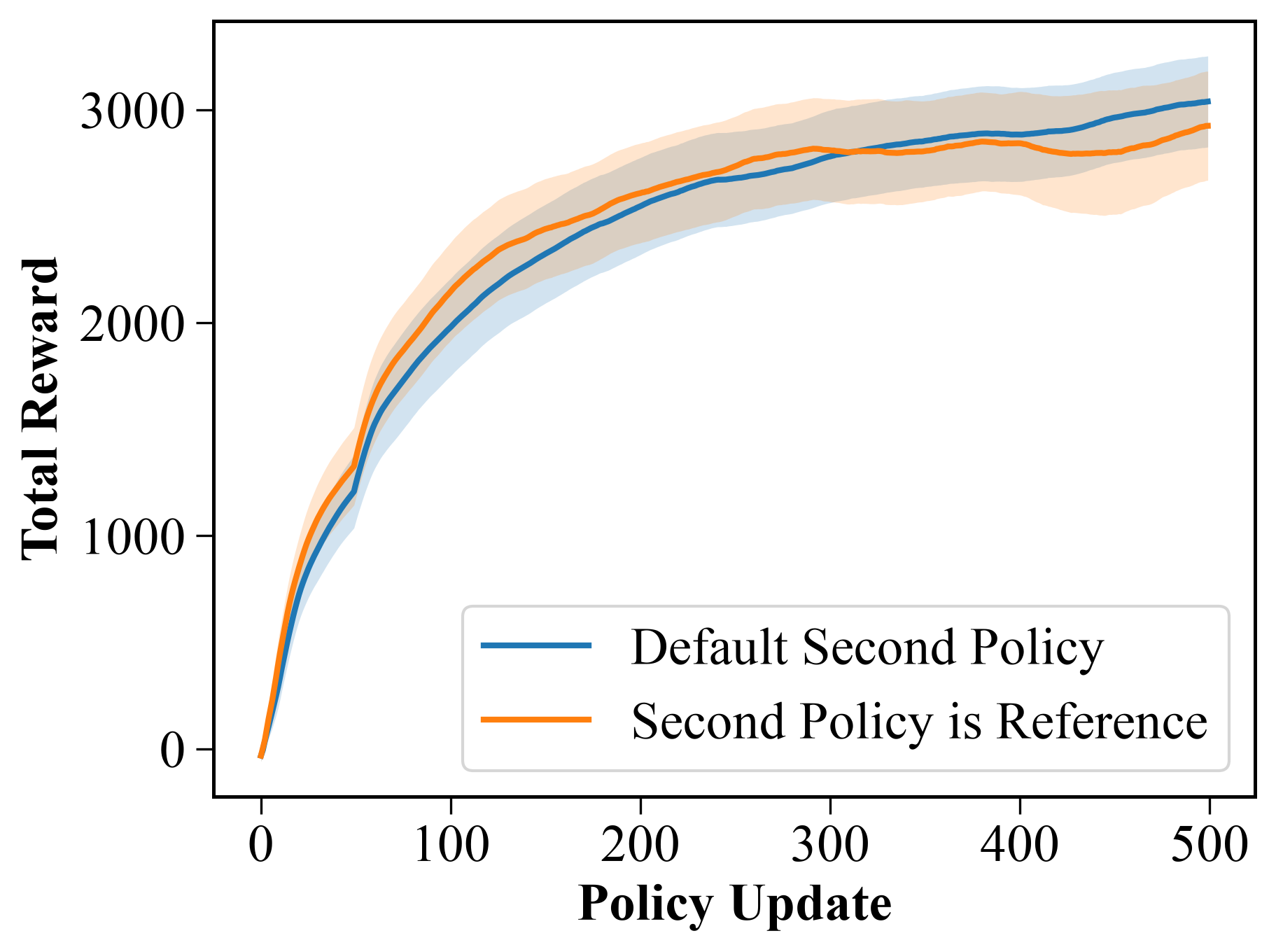} 
        \caption{HalfCheetah-v5}
    \end{subfigure}
    \hfill
    \begin{subfigure}[b]{0.32 \linewidth}
        \centering
        \includegraphics[width=\linewidth]{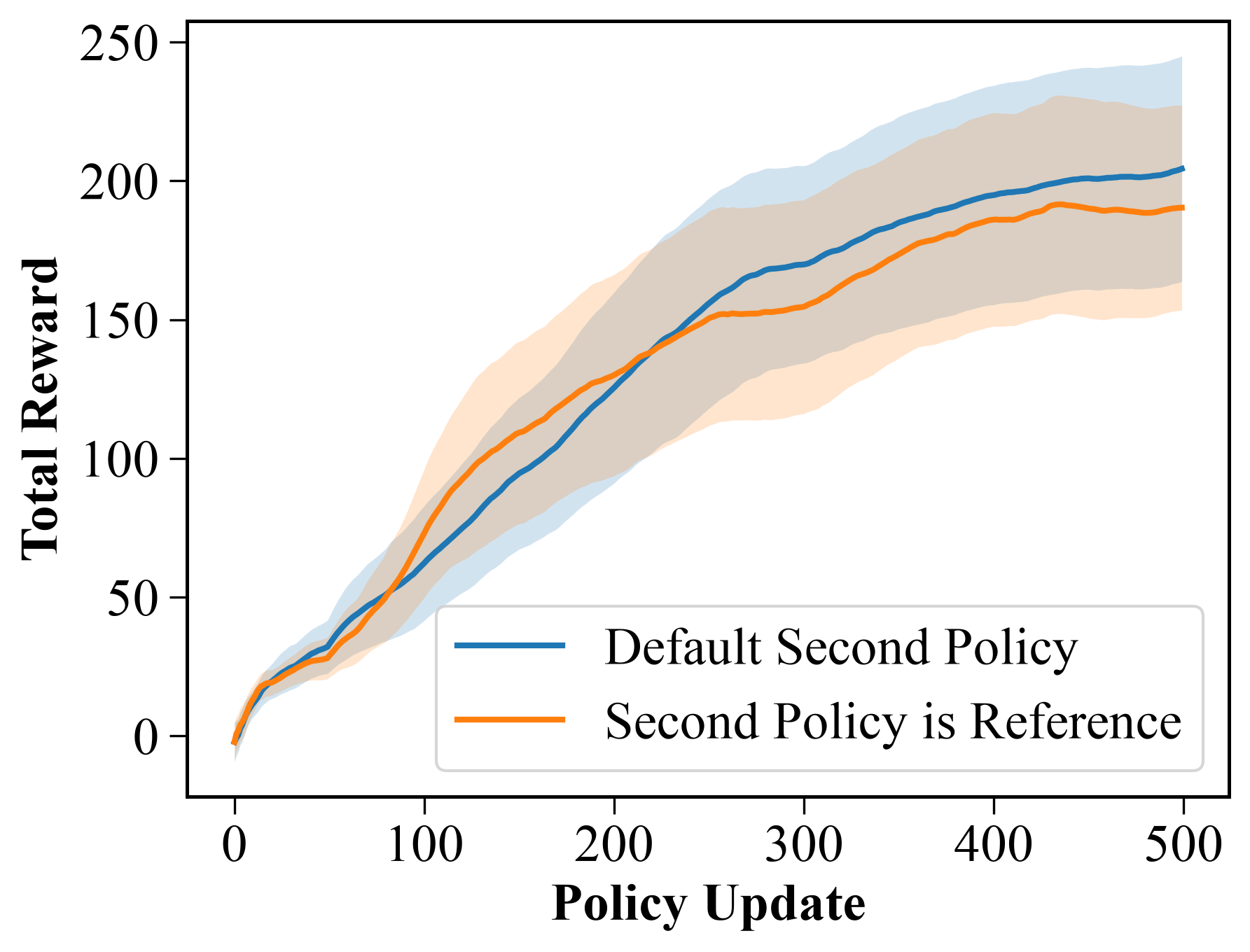} 
        \caption{Swimmer-v5}
    \end{subfigure}
    \hfill
    \caption{Ablation Study of Second Policy in Loss Function. Shaded areas represent 95\% confidence intervals.}
    \label{fig:two_is_ref_study}
\end{figure}

\section{Implementation Details}

All algorithms were run in Python 3.13.3 on Ubuntu 24.04.4. The robotic experiments were run on the CPU (a Ryzen Threadripper Pro with 565 GB of RAM). The LLM experiments were run on four NVIDIA RTX PRO 6000 Blackwell Max-Qs. Python libraries and their versions are included in the code appendix.

\subsection{Simulated Robotic Experiments.}\label{sec:implementation_details_rob}

All algorithms were implemented in PyTorch \citep{paszke2019pytorch}. All policy networks had two fully connected hidden layers with $64$ neurons each, and all activation functions were hyperbolic tangent (including the final layers). The Adam \citep{kingma2014adam} optimizer was used for all policy networks. During training, policies were Gaussian with a parameter controlling the log standard deviation for each dimension in the action space. However, during the evaluation, all policies were based on their deterministic version without the action noise. 

In each algorithm, before the total reward/return difference was plugged into the respective Bradley-Terry oracle, the difference was multiplied by an expertise factor. For ZPG, Online DPO, and P3O we use $10/H$ where $H$ is the time horizon. The expertise factor is chosen to be horizon dependent so that a unit difference in average reward between two trajectories will result in the same preference probability across all horizons. For SP3O, the expertise factor is $0.1$. This is not time dependent and higher than the algorithms because the SP3O preference model uses a discount factor, and with $\gamma = 0.99$, a unit difference in average reward will result in an increase of $\approx 1/(1-\gamma) = 100$ in the return difference. We chose $0.1$ since a unit increase in average reward results in the same preference probability when using segment of trajectory preferences.

\subsubsection{Global Hyperparameters}

All environments used $\gamma = 0.99$, except for Swimmer-v5, where we choose $\gamma = 0.999$ as suggested by \citet{franceschetti2022making}. According to the logic above, the expertise factor for SP3O in Swimmer-v5 is 0.01. For the main experiments we used seeds 42 through 141 inclusive, and the seeds for the L ablation experiment were 42 through 4041 inclusive.

\subsubsection{SP3O}

Although the theoretical basis for SP3O requires sampling states from the discounted state distribution, in practice, we sample states from each timestep uniformly. This is due to simplicity, and that we are evaluating policies based on total episode reward, not discounted reward.

Our estimate of the $Q$ function in the preference model, $\widehat{Q}$, used the actual discounted return of the rest of the trajectory at the end of the segment. However, if two segments $\sigma^1$ and $\sigma^2$ were $K_1$ and $K_2$ timesteps from the horizon $H$. Then, for both, we would only estimate $\widehat{Q}$ using the next $\min(K_1,K_2)$ timesteps.

For a segment pair $(\sigma^1,\sigma^2)$, the SP3O loss function will only take the gradient of the policy with respect to $\sigma^1$. Because of this asymmetry for every segment pair $(\sigma^1,\sigma^2)$ and estimated return difference $D(\sigma^1,\sigma^2)$, we can generate a new segment pair and return difference $(\sigma^2, \sigma^1), -D(\sigma^1,\sigma^2)$ without additional environmental interaction or queries to the preference oracle. Thus in practice for any segment pair we generate $(\sigma^1, \sigma^2)$ we will also use the segment pair $(\sigma^2, \sigma^1)$.

Similar to an advantage normalization for the training of PPO, the $D(\sigma^1_n,\sigma^2_n)$ in the preference dataset are normalized to standard deviation $1$ at each training step.

SP3O had all gradient norms clipped to $0.5$. Further hyperparameters are shown in Table~\ref{tab:sp3o_hyper}.

\begin{table*}[ht]
    \centering
        \begin{tabular}{lllllll}
        \toprule
             & Ant-v5 & HalfCheetah-v5 & Swimmer-v5 \\ 
        \midrule
            $\varepsilon$ & 0.2 & 0.2 & 0.2 \\
            Learning Rate & 3e-4 & 3e-4 & 3e-4 \\  
            Initial Log Std & -1.2 & -1.3 & 0.0  \\
            Epochs & 5 & 5 & 5 \\
        \bottomrule
        \end{tabular}
    \caption{SP3O Hyperparameters}
    \label{tab:sp3o_hyper}
\end{table*}

\subsubsection{P3O}

At each policy iteration, P3O sampled 10 trajectories from the environment. This gave a total of $45$ pairs per policy update. The preferences were based on a Bradley-Terry oracle using the difference in total reward sum. 

Similar to an advantage normalization for the training of PPO, the reward difference terms in the preference dataset are normalized to standard deviation $1$ at each training step.

P3O had all gradient norms clipped to $0.5$. Further hyperparameters are shown in Table~\ref{tab:p3o_hyper}.

\begin{table*}[ht]
    \centering
        \begin{tabular}{lllllll}
        \toprule
             & Ant-v5 & HalfCheetah-v5 & Swimmer-v5 \\ 
        \midrule
            $\varepsilon$ & 0.2 & 0.2 & 0.2 \\
            Initial Log Std & -1.2 & -1.3 & 0.0  \\
            Learning Rate & 3e-4 & 3e-4 & 3e-4 \\  
            Epochs & 5 & 5 & 5 \\
        \bottomrule
        \end{tabular}
    \caption{P3O Hyperparameters}
    \label{tab:p3o_hyper}
\end{table*}

\subsubsection{Online DPO}

At each policy iteration, Online DPO sampled 10 trajectories from the environment. This gave a total of $45$ pairs per policy update. The preferences were based on a Bradley-Terry oracle using the difference in total reward sum. Online DPO had all gradient norms clipped to $0.5$. Further hyperparameters are shown in Table~\ref{tab:online_dpo_hyper}.

\begin{table*}[ht]
    \centering
        \begin{tabular}{lllllll}
        \toprule
             & Ant-v5 & HalfCheetah-v5 & Swimmer-v5  \\ 
        \midrule
            $\beta$ & 0.1 & 0.1 & 0.1 \\
            Initial Log Std & -1.2 & -1.3 & 0.0  \\
            Learning Rate & 7e-4 & 7e-4 & 7e-4 \\
            Epochs & 5 & 5 & 5 \\
        \bottomrule
        \end{tabular}
    \caption{Online DPO Hyperparameters}
    \label{tab:online_dpo_hyper}
\end{table*}

\subsubsection{ZPG}

 At each policy iteration, ZPG sampled 10 trajectories from the environment. This gave a total of $25$ pairs per policy update. This value is different from that of Online DPO and P3O, since half of ZPG's trajectories are generated using the perturbed policy. The learning rate was $0.003$, the perturbation distance was $0.1$, and there was one epoch per policy update. Our implementation of ZPG also used an Adam \citep{kingma2014adam} style momentum system at each update step.

\subsection{LLM Finetuning Experiments.}\label{sec:implementation_details_llm}

\subsubsection{Hyperparameters}

LoRA adapters were applied to all linear layers with rank $16$, scaling $\alpha=32$, and dropout $0.05$. 

The AdamW \citep{loshchilov2017decoupled} optimizer was used for all algorithms with $\beta_1=0.9$, $\beta_2=0.999$, and $\epsilon=10^{-8}$. The global gradient norm is clipped to $1.0$. 

Shown in Figure~\ref{fig:lr_tuning} are the toxicity versus KL curves of each algorithm with different learning rates. For the regularization parameters in these experiments we use $\beta = 1.0$ for SP3O-KL, $\varepsilon = 0.2$ for SP3O-Clip, $\varepsilon = 0.2$ for P3O, and $\beta = 0.5$ for DPO. 

In P3O, SP3O-KL, and SP3O-Clip, we observe that the curve does not improve when the learning rate goes lower than 1e-5. In DPO, we observe that the curve does not improve when the learning rate goes lower than 2e-5. These are the learning rate values we use for the main experiments.

\begin{figure}[htbp]
    \centering
    \begin{subfigure}[b]{0.24 \linewidth}
        \centering
        \includegraphics[width=\linewidth]{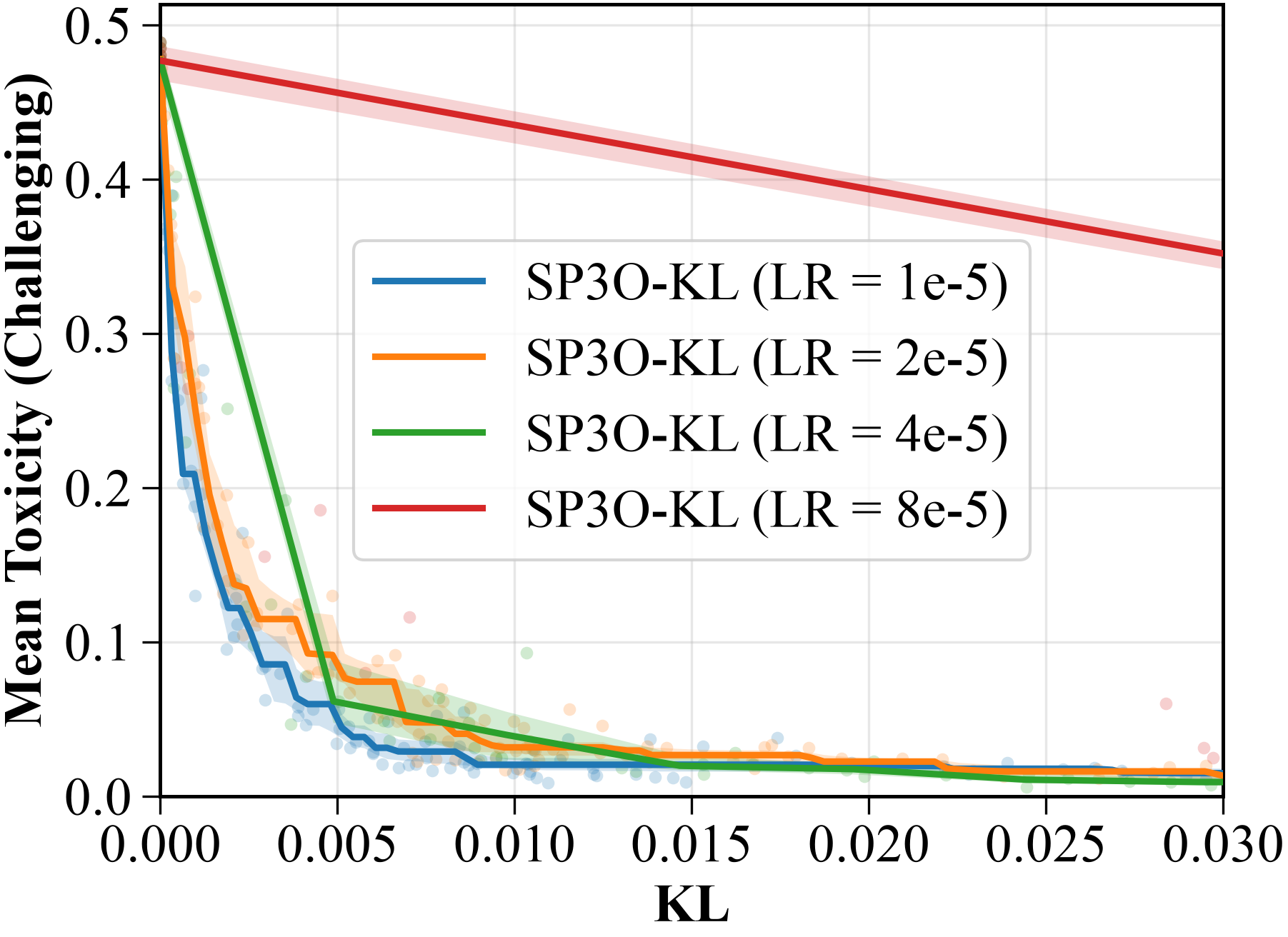} 
    \end{subfigure}
    \hfill
    \begin{subfigure}[b]{0.24 \linewidth}
        \centering
        \includegraphics[width=\linewidth]{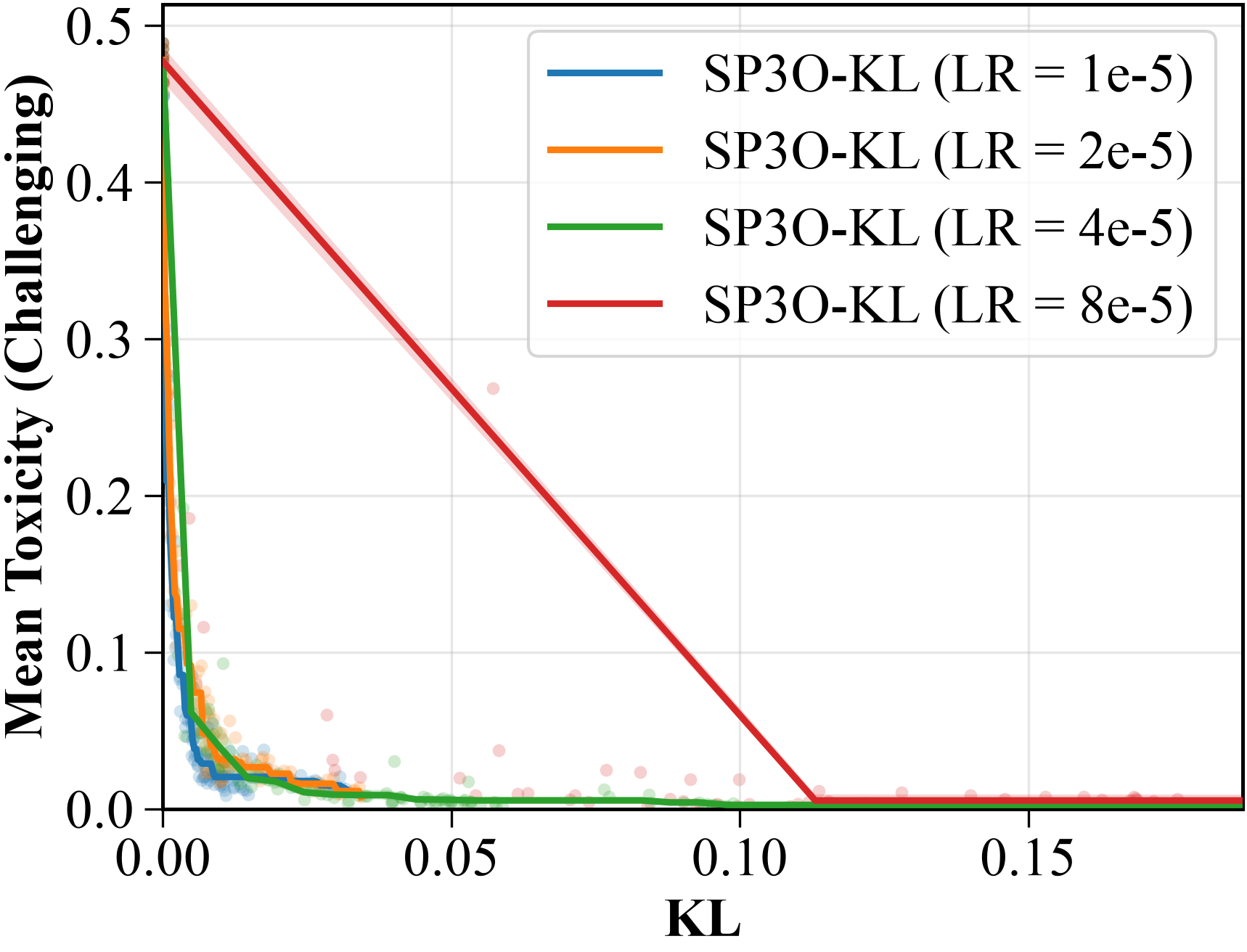} 
    \end{subfigure}
    \hfill
    \begin{subfigure}[b]{0.24 \linewidth}
        \centering
        \includegraphics[width=\linewidth]{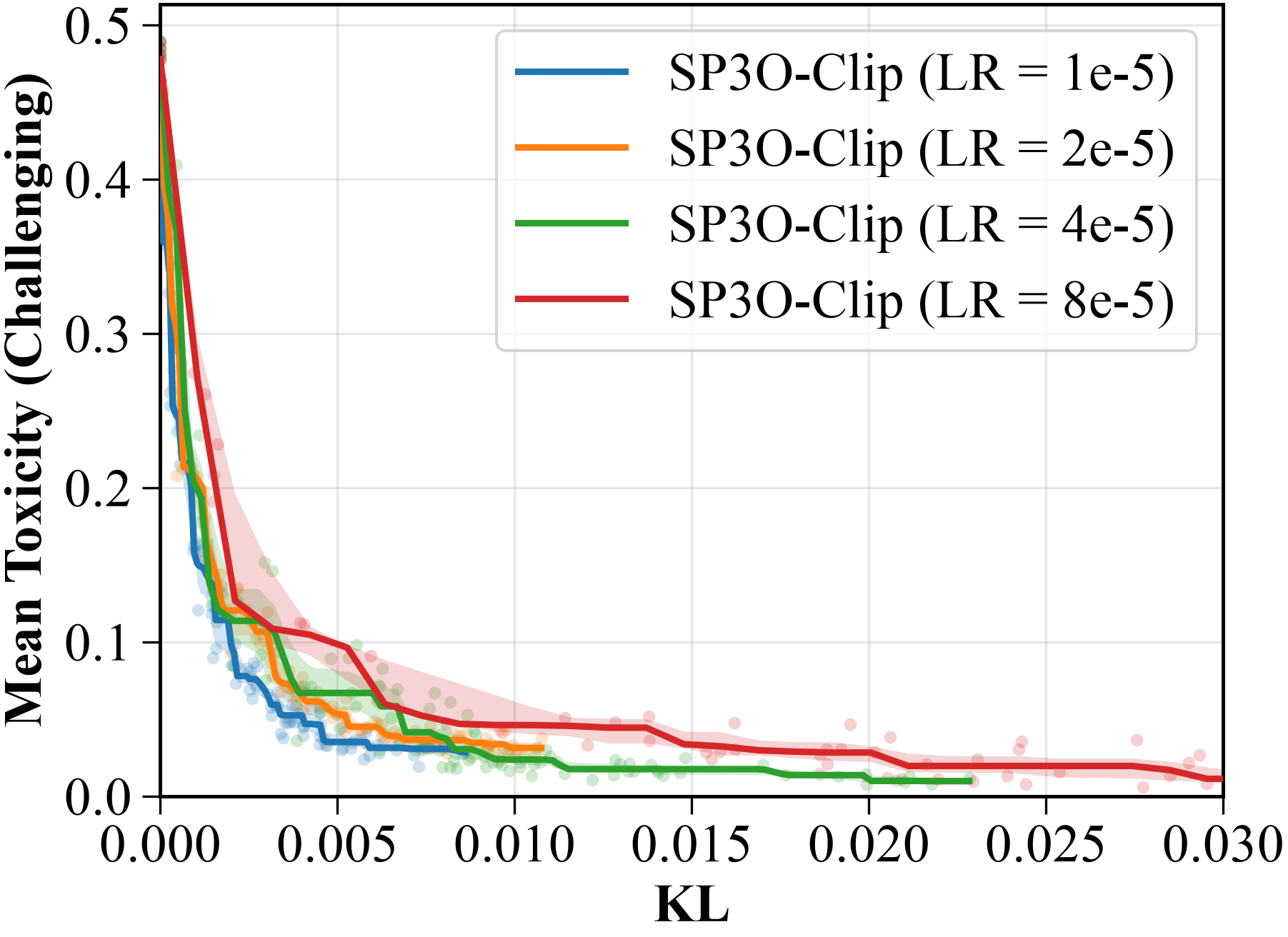} 
    \end{subfigure}
    \hfill
    \begin{subfigure}[b]{0.24 \linewidth}
        \centering
        \includegraphics[width=\linewidth]{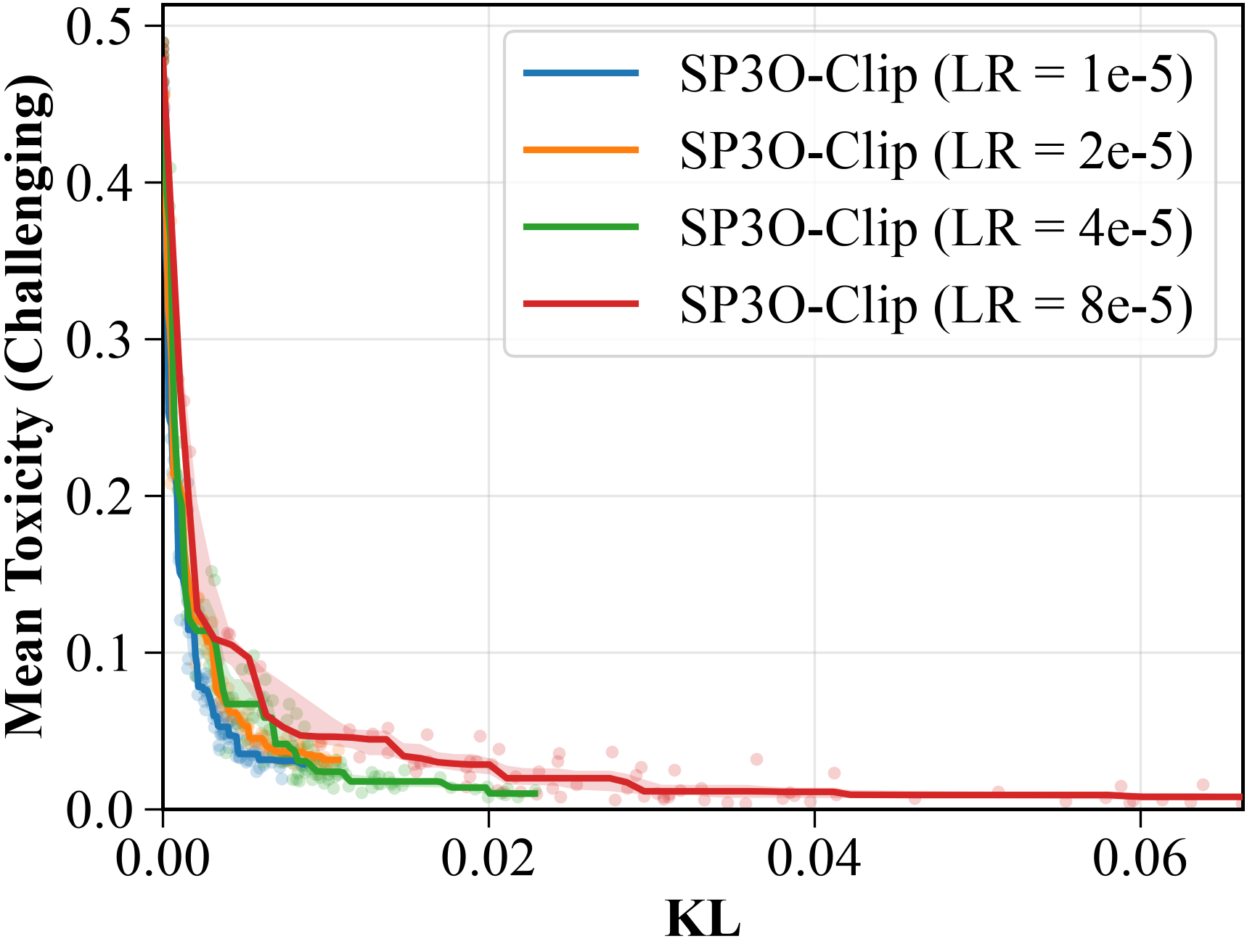} 
    \end{subfigure}
    \hfill
    \begin{subfigure}[b]{0.24 \linewidth}
        \centering
        \includegraphics[width=\linewidth]{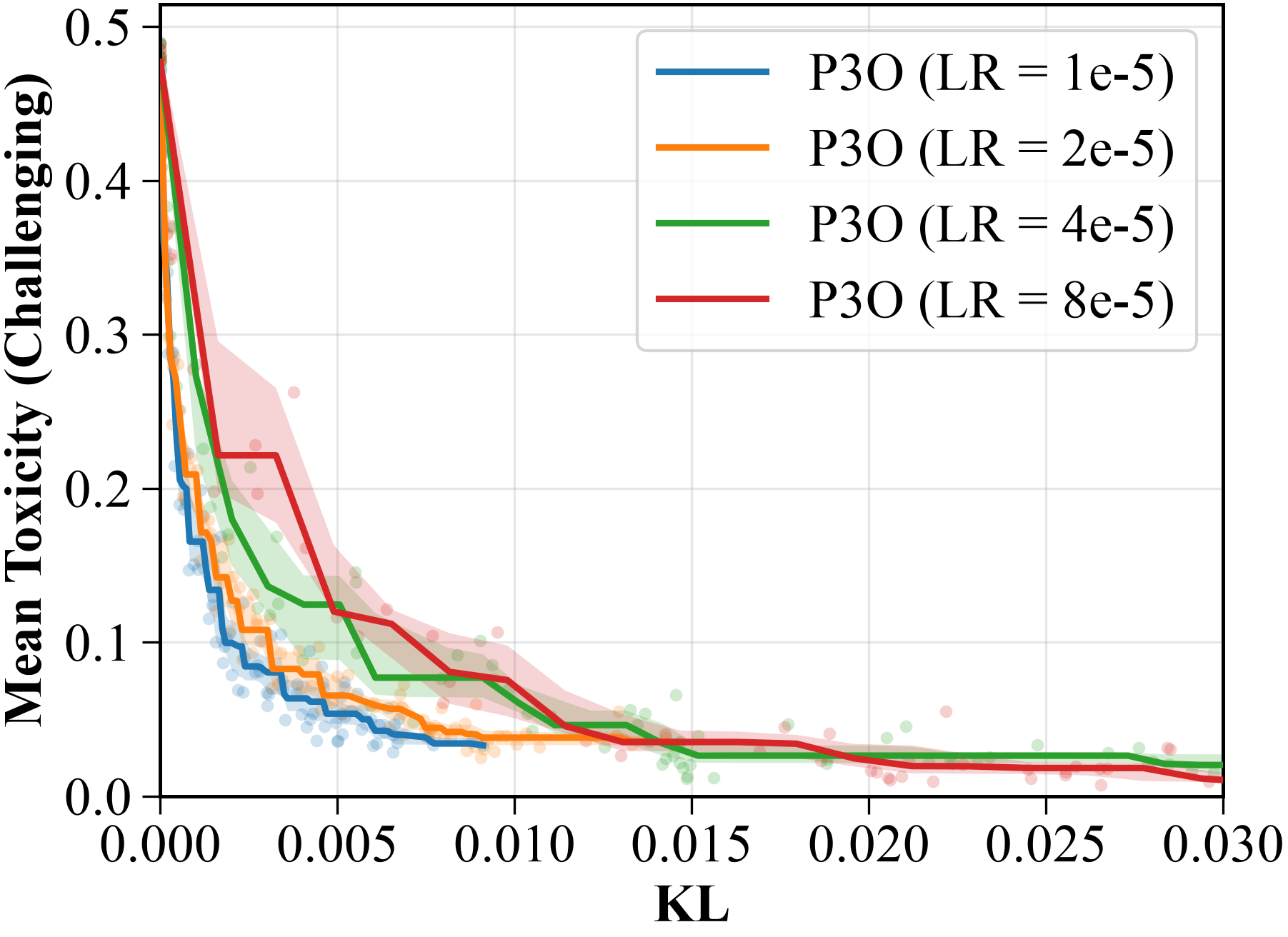} 
    \end{subfigure}
    \hfill
    \begin{subfigure}[b]{0.24 \linewidth}
        \centering
        \includegraphics[width=\linewidth]{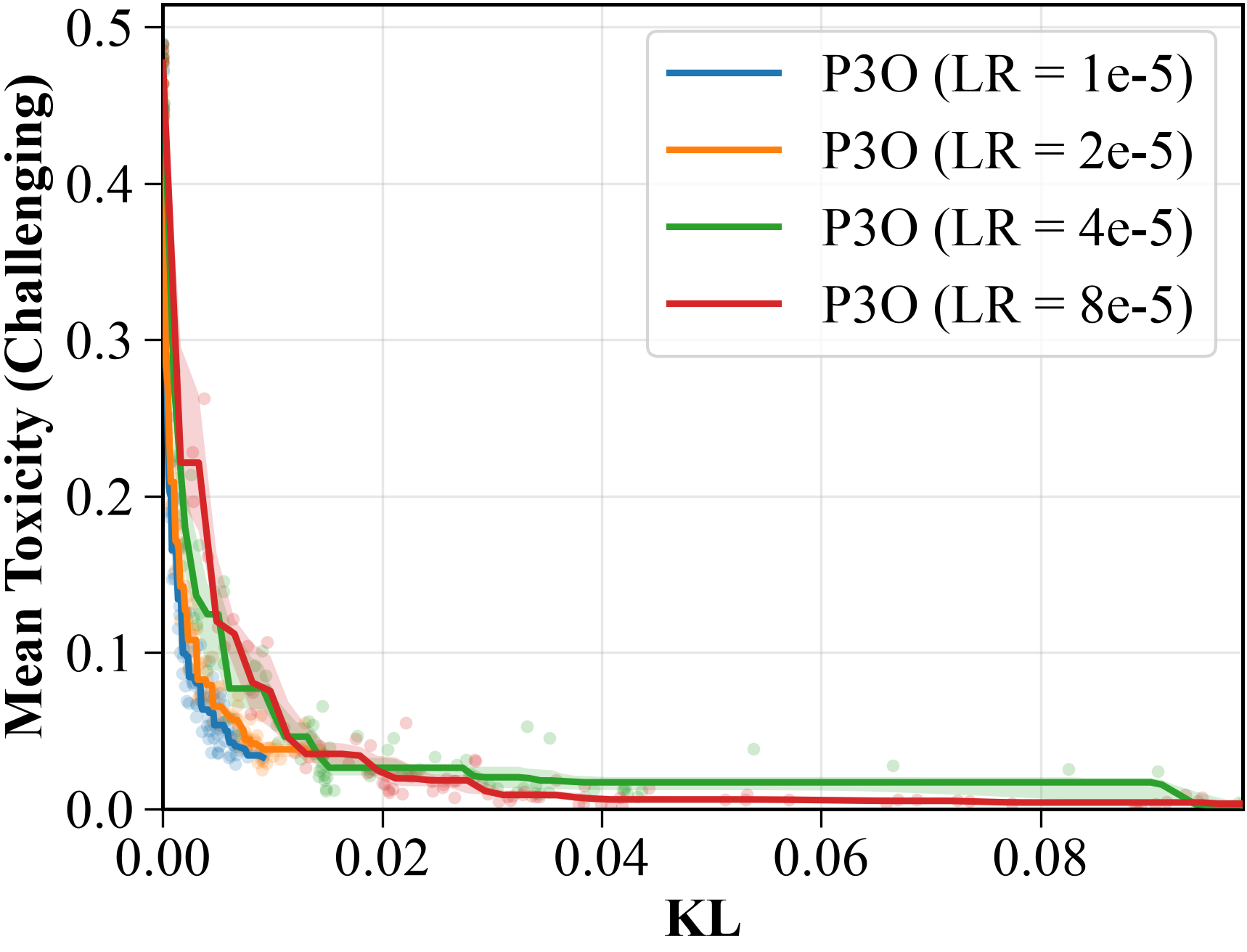} 
    \end{subfigure}
    \hfill
    \begin{subfigure}[b]{0.24 \linewidth}
        \centering
        \includegraphics[width=\linewidth]{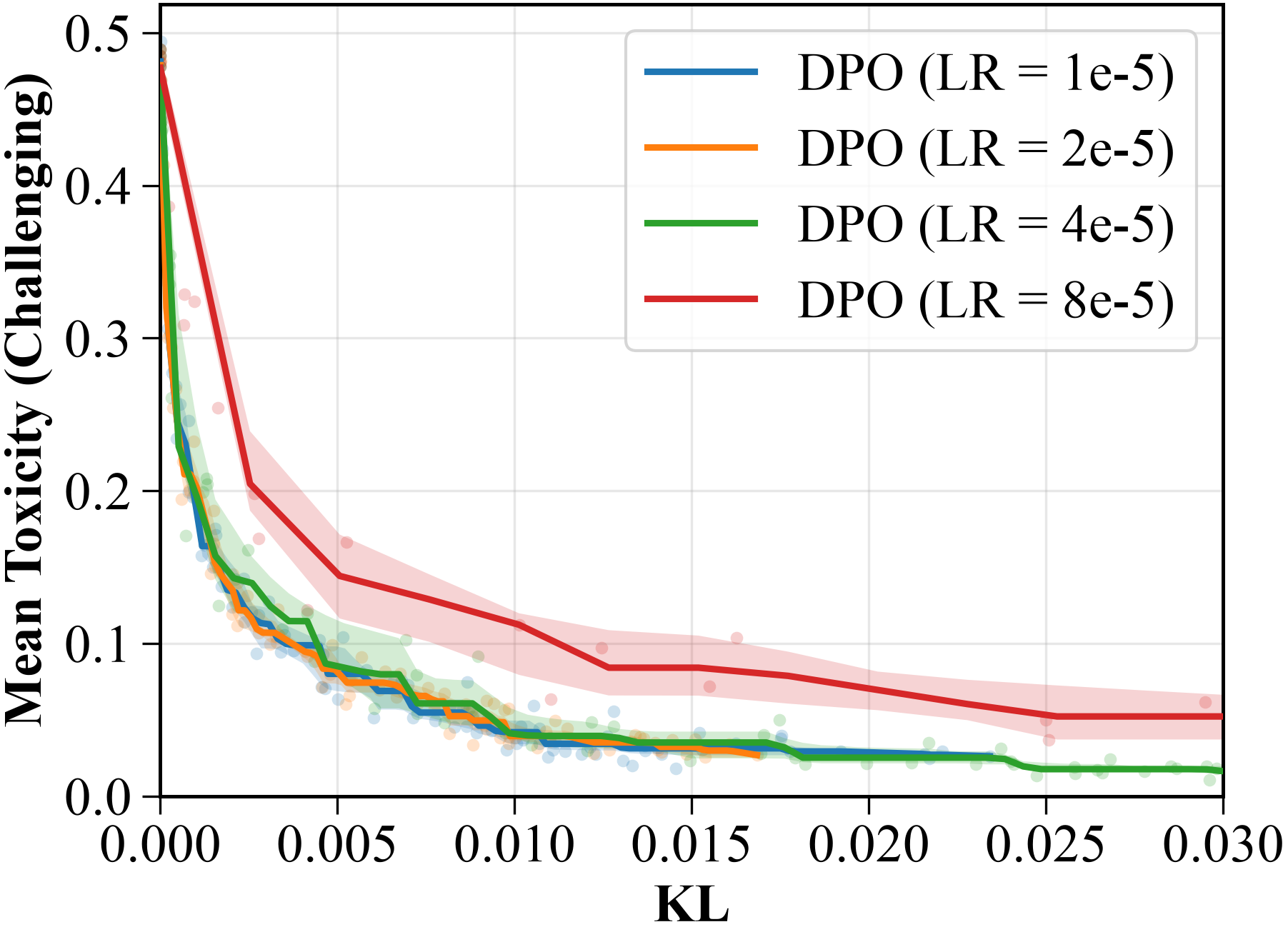} 
    \end{subfigure}
    \hfill
    \begin{subfigure}[b]{0.24 \linewidth}
        \centering
        \includegraphics[width=\linewidth]{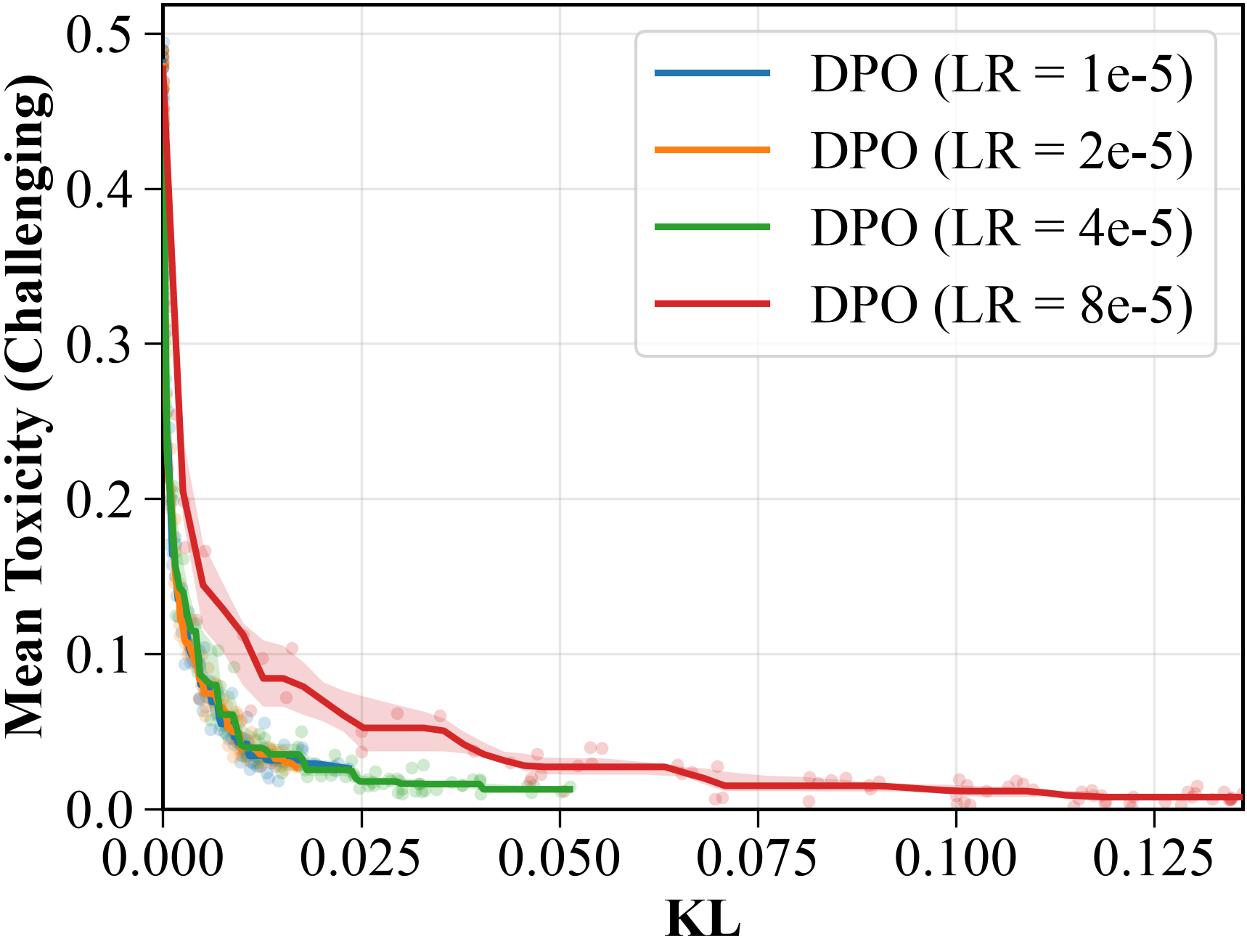} 
    \end{subfigure}
    \hfill
    \caption{Learning Rate Tuning Results.}
    \label{fig:lr_tuning}
\end{figure}

The KL divergence is calculated by randomly picking 1000 prompts from the entire dataset. For all KL divergences we use the $k_3$ estimator \citep{schulman2020approximating}. All log probabilities of sentences and full continuations are calculated with a mean reduction to avoid issues with varying lengths. The seeds for all experiments were $42,43,44,45,46$. 

In SP3O and P3O, we set the reward difference terms to $+1$ or $-1$ depending on which segment/continuation is preferred, and don't infer any magnitude of the preference. In P3O, we use the P3O-V1 (Clipping Separately) from the original P3O paper \citep{wu2023pairwiseproximalpolicyoptimization}.

In all algorithms, we did not update the reference policy in the loss functions from the base model, despite sampling from the current policy. This is because the difference between the models is small, and keeping the base model as the reference gives better regularization. In Figure~\ref{fig:ref_policy}, we provide a comparison on updating the reference policy to the current generating policy versus keeping the base policy as the reference in the loss function. We observe that for all algorithms keeping the reference policy as the base model provides a better curve.

\begin{figure}[htbp]
    \centering
    \begin{subfigure}[b]{0.45 \linewidth}
        \centering
        \includegraphics[width=\linewidth]{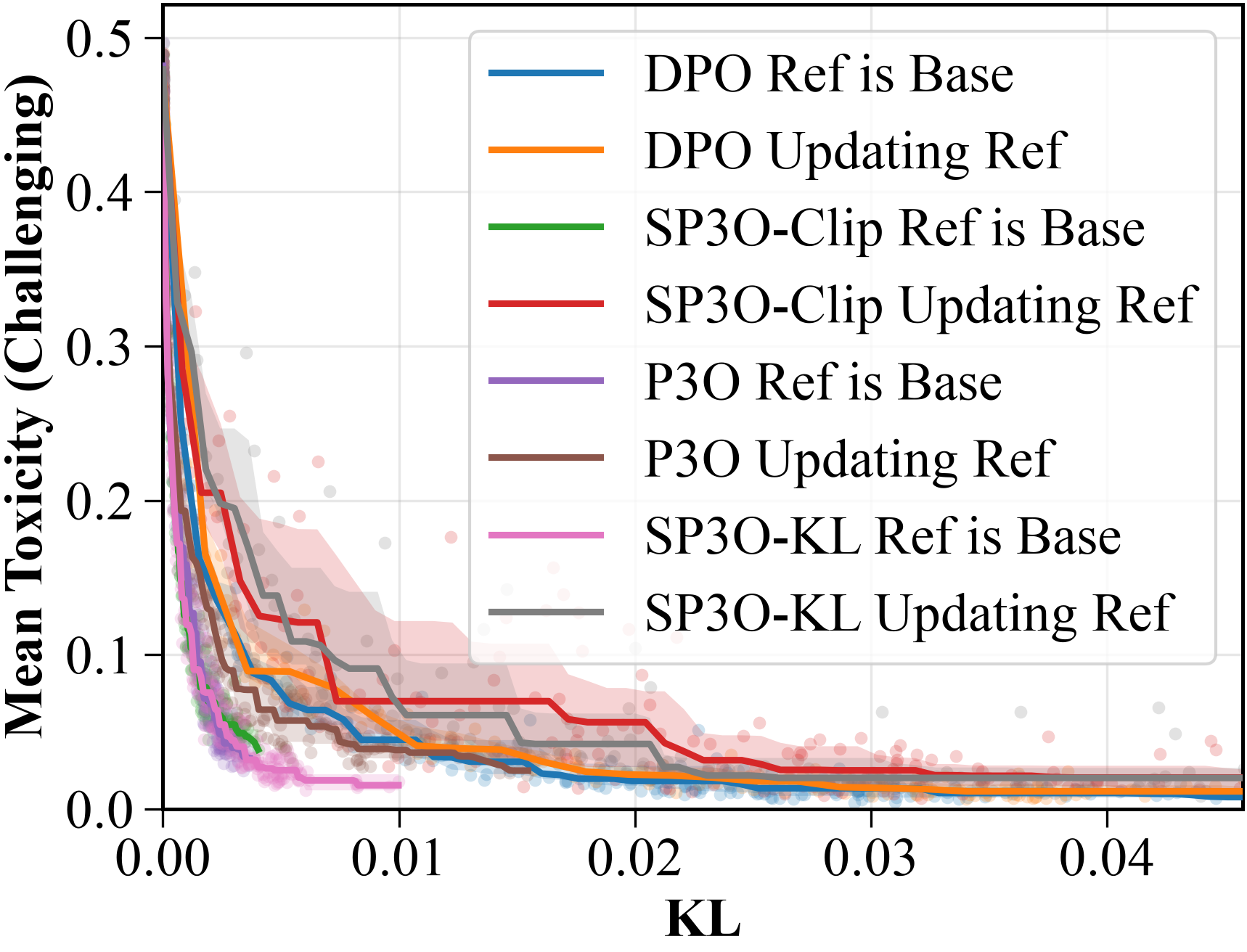} 
    \end{subfigure}
    \hfill
    \begin{subfigure}[b]{0.49 \linewidth}
        \centering
        \includegraphics[width=\linewidth]{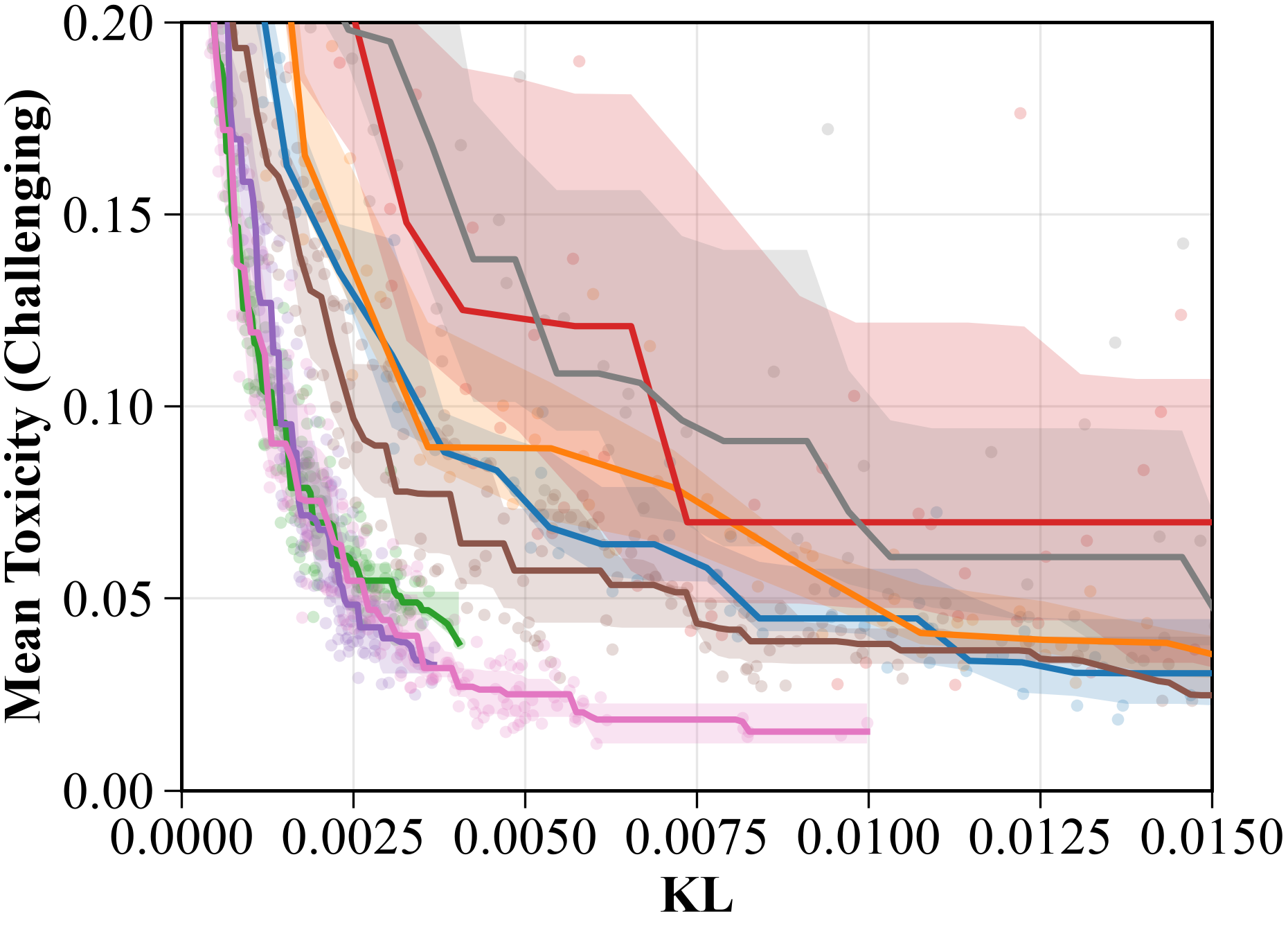} 
    \end{subfigure}
    \caption{Comparison of Changing Reference Policy.}
    \label{fig:ref_policy}
\end{figure}

\subsubsection{Fitting Toxicity vs KL Curves}

We fit these curves with the Pool-Adjacent-Violators Algorithm (PAVA), using the cloud of points generated my each training runs checkpoints. For the 95\% confidence intervals are found through bootstrapping 300 times.

\subsubsection{SP3O Sentence Splitting}

We split a continuation at points where a period, question mark, or exclamation mark appears and at the end of the continuation. We note that these punctuation marks can show up within a sentence such as in the title "Dr.". This creates a small disadvantage for SP3O, as the created sentences could be very short. However, this kind of situation is rare enough such that the impact on SP3O's performance should be negligible. 

\section{Theoretical Analysis and Proof of Theorems}

\subsection{Proof of Lemma~\ref{lemma:pdsurrogate}}\label{sec:obj_diff_proof}

\textbf{Lemma 1} \begin{it}
Suppose we have three policies $\pi_{1}$, $\pi_2$ and $\pi_{\mathrm{ref}}$. We assume that the support of $\pi_1$ and $\pi_2$ is enclosed within the support of $\pi_\mathrm{ref}$, that is $\pi_1(s|a) > 0$ or $\pi_2(s|a) > 0$ implies $\pi_\mathrm{ref}(s|a) > 0, \forall s \in \mathcal{S}, a \in \mathcal{A}$. Then we can estimate the objective difference of $\pi_1$ and $\pi_2$ using only $\pi_\mathrm{ref}$ as follows.
\end{it}

\begin{proof}
Firstly, we can estimate the objective using only the first $H$ timesteps due to the discount factor. Specifically, the error between two trajectories is:
\begin{align*}
    &\left | \sum_{t=1}^\infty \gamma^{t-1}r(s_t,a_t) - \sum_{t=1}^H \gamma^{t-1}r(s_t,a_t) \right |
    =  \gamma^{H}\sum_{t=1}^\infty \gamma^{t-1}r(s_t,a_t)
    \leq  \gamma^H \sum_{t=1}^\infty \gamma^{t-1}
    = \gamma^H \frac{1}{1-\gamma}.
\end{align*}
Given this the error in estimating the expected value difference surrogate will be at most $\frac{2\gamma^H}{1-\gamma}$, which can be very small given $H$ is the effective horizon. Then we can estimate the difference in expected value as:
\begin{align*}
    &J_\mathcal{M}(\pi_1) - J_\mathcal{M}(\pi_2) \\
    =& \E_{a^1_t \sim \pi_1(\cdot | s^1_t), s^1_1 \sim d_0}\left[ \sum_{t=1}^H \gamma^{t-1}r(s^1_t,a^1_t) \right] - \E_{a^2_t \sim \pi_2(\cdot \mid s^2_t), s^2_1 \sim d_0}\left[ \sum_{t=1}^H \gamma^{t-1}r(s^2_t,a^2_t) \right]+\mathcal{O}(\gamma^H)\\
    =& \E_{\substack{a^1_t \sim \pi_1(\cdot | s_t), s^1_1 \sim d_0, \\ a^2_t \sim \pi_2(\cdot | s^2_t),s^2_1 \sim d_0}}\Bigg[ \sum_{t=1}^H \gamma^{t-1}r(s^1_t,a^1_t) - \sum_{t=1}^H \gamma^{t-1}r(s^2_t,a^2_t) \Bigg]
    +\mathcal{O}(\gamma^H)\\
    =& \E_{\substack{a^1_t \sim \pi_\mathrm{ref}(\cdot | s^1_t), s^1_1 \sim d_0, \\ a^2_t \sim \pi_\mathrm{ref}(\cdot | s^2_t),s^2_1 \sim d_0}}\Bigg [  \frac{\prob_{\pi_1}(\tau^1)\prob_{\pi_2}(\tau^2)}{\prob_{\pi_\mathrm{ref}}(\tau^1)\prob_{\pi_\mathrm{ref}}(\tau^2)} \left[ \sum_{t=1}^H \gamma^{t-1}r(s_t,a_t) - \sum_{t=1}^H \gamma^{t-1}r(s'_t,a'_t) \right] \Bigg]+\mathcal{O}(\gamma^H),
\end{align*} where $\tau^1=\{(s_t^1, a_t^1)\}_{t=1}^H$ and $\tau^2=\{s^2_t, a_t^2\}_{t=1}^H.$

Now, let us analyze each importance sampling term as follows: 
\begin{align*}
    \frac{\prob_{\pi_1}(\tau^1)}{\prob_{\pi_\mathrm{ref}}(\tau^1)} 
    =&  \frac{d_0(s^1_1)\pi_{1}(a^1_1|s^1_1) P(s^1_2|s^1_1, a^1_1) \cdots P(s_H^1|s_{H-1}^1,a_{H-1}^1)\pi_1(a_H^1|s_H^1)}{d_0(s^1_1)\pi_{\mathrm{ref}}(a^1_1|s^1_1) P(s^1_2|s^1_1, a^1_1) \cdots P(s_H^1|s_{H-1}^1,a_{H-1}^1)\pi_\mathrm{ref}(a_H^1|s_H^1)} \\
    =& \prod_{h=1}^H \frac{\pi_1(a^1_h|s^1_h)}{\pi_{\mathrm{ref}}(a^1_h|s^1_h)},
\end{align*}
where the last equality is due to the same transition in each trajectory. Therefore, we can express the expected value difference as:
\begin{align*}
    &J_\mathcal{M}(\pi_1) - J_\mathcal{M}(\pi_2)\\
    =& \E_{\substack{a^1_t \sim \pi_\mathrm{ref}(\cdot | s^1_t), s^1_1 \sim d_0, \\ a_t^2 \sim \pi_\mathrm{ref}(\cdot | s^2_t),s^2_1 \sim d_0}}\Bigg [ \prod_{t=1}^H\frac{\pi_1(a_t|s_t)}{\pi_\mathrm{ref}(a_t|s_t)} \frac{\pi_2(a_t|s_t)}{\pi_\mathrm{ref}(a_t|s_t)} \left[ \sum_{t=1}^H \gamma^{t-1}r(s^1_t,a^1_t) - \sum_{t=1}^H \gamma^{t-1}r(s^2_t,a^2_t) \right]\Bigg]+\mathcal{O}\left(\gamma^H\right).
\end{align*}
\end{proof}

\subsection{Proof of Lemma \ref{lemma:q}} \label{sec:qlemmaproof}

\begin{lemma}\label{lemma:q}
     For all $s\in \mathcal{S}$ and $a\in \mathcal{A}$, we have $Q_{\mathcal{M'}}^{\pi_\mathrm{ref}}(s,a) = Q_{\mathcal{M}}^{\pi_\mathrm{ref}}(s,a)$.
\end{lemma}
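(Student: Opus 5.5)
We read $Q_{\mathcal{M'}}^{\pi_\mathrm{ref}}(s,a)$ as the $Q$-function of the segment MDP at the first step of a segment, i.e. the expected discounted return from step $h=1$ up to the terminal step $L$. More generally, for each $h\in\{1,\dots,L\}$ we set
\begin{align*}
Q_{\mathcal{M'},h}^{\pi_\mathrm{ref}}(s,a)=\E\Big[\sum_{k=h}^{L}\gamma^{k-h}r_k'(s_k,a_k)\,\Big|\,s_h=s,a_h=a,\ a_k\sim\pi_\mathrm{ref}(\cdot|s_k)\Big],
\end{align*}
where transitions follow the same kernel $P$ as in $\mathcal{M}$. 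The initial distribution $d_0'$ plays no role here, because $Q$ is conditioned on the starting state-action pair. We will prove the stronger statement $Q_{\mathcal{M'},h}^{\pi_\mathrm{ref}}=Q_{\mathcal{M}}^{\pi_\mathrm{ref}}$ for every $h$, and the lemma is the case $h=1$.

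The argument is a backward induction on $h$, driven by the Bellman equations of the two MDPs.

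\emph{Base case $h=L$.} The only remaining reward is the terminal one, so $Q_{\mathcal{M'},L}^{\pi_\mathrm{ref}}(s,a)=r_L'(s,a)=Q_{\mathcal{M}}^{\pi_\mathrm{ref}}(s,a)$ directly by the definition of $r_L'$. This also settles the degenerate case $L=1$.

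\emph{Inductive step.} Fix $h<L$ and assume the claim holds at $h+1$. The finite-horizon Bellman equation of $\mathcal{M'}$ reads
\begin{align*}
Q_{\mathcal{M'},h}^{\pi_\mathrm{ref}}(s,a)=r(s,a)+\gamma\,\E_{s'\sim P(\cdot|s,a),\,a'\sim\pi_\mathrm{ref}(\cdot|s')}\big[Q_{\mathcal{M'},h+1}^{\pi_\mathrm{ref}}(s',a')\big],
\end{align*}
where we used $r_h'=r$ for $h\neq L$. Substituting the inductive hypothesis turns the right-hand side into
\begin{align*}
r(s,a)+\gamma\,\E\big[Q_{\mathcal{M}}^{\pi_\mathrm{ref}}(s',a')\big],
\end{align*}
which is exactly the Bellman right-hand side of the infinite-horizon $Q_{\mathcal{M}}^{\pi_\mathrm{ref}}(s,a)$. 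This closes the induction.

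An equivalent route is to unroll the infinite-horizon Bellman equation of $\mathcal{M}$ for $L-h$ steps. This gives $Q_\mathcal{M}^{\pi_\mathrm{ref}}(s,a)=\E\big[\sum_{k=h}^{L-1}\gamma^{k-h}r(s_k,a_k)+\gamma^{L-h}Q_\mathcal{M}^{\pi_\mathrm{ref}}(s_L,a_L)\big]$, which is literally the defining expectation of $Q_{\mathcal{M'},h}^{\pi_\mathrm{ref}}$.

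The main point of care is bookkeeping rather than mathematics. First, one must make sure the discount exponent on the terminal reward is $\gamma^{L-h}$ (equal to $\gamma^{L-1}$ at $h=1$), consistent with the $\gamma^{L-1}\widehat{Q}$ term in the preference model. Second, the infinite-horizon Bellman identity for $Q_\mathcal{M}^{\pi_\mathrm{ref}}$ must be justified; this follows from rewards lying in $[0,1]$, so every quantity is bounded by $1/(1-\gamma)$ and the tower property applies. Finally, we state explicitly that the same kernel $P$ and the same policy $\pi_\mathrm{ref}$ generate the trajectories in both MDPs.
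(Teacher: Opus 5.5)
Your proof is correct and is essentially the paper's argument. The paper fixes the number $k$ of steps remaining before the segment terminates and unrolls the Bellman equation of $\mathcal{M}$ for $k-1$ steps to match the defining expectation of $Q_{\mathcal{M'}}^{\pi_\mathrm{ref}}$, which is exactly your ``equivalent route,'' and your backward induction on $h$ is the same computation done one step at a time. Your time index $h$ corresponds to the paper's ``$k$ steps from termination'' via $k=L-h+1$. Proving the claim for every $h$ is the form actually needed later in the proof of Theorem~\ref{thm:difference_theorem_statement}.
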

\begin{proof}
Given some $s_1 \in \mathcal{S}, a_1 \in \mathcal{A}$. If $s_1$ is one step from termination then it follows from definitions that $Q^{\pi_\mathrm{ref}}_{\mathcal{M'}}(s_1,a_1) = r'(s_1,a_1) = Q^{\pi_\mathrm{ref}}_{\mathcal{M}}(s_1,a_1)$. In the other case, assume without loss of generality that $s_1$ is $k > 1$ steps from termination in $\mathcal{M'}$. Then:
\begin{align*}
    Q_{\mathcal{M}'}^{\pi_\mathrm{ref}}(s_1,a_1) 
    =& \E_{\pi_\mathrm{ref}} \left [\sum_{t=1}^{k} \gamma^{t-1}r'(s_t,a_t) \right]\\
    =& \sum_{t=1}^{k}\gamma^{t-1}\E_{\pi_\mathrm{ref}}[r'(s_t,a_t) ] \\
    =& \gamma^{k-1}\E_{\pi_\mathrm{ref}}[ Q^{\pi_{\mathrm{ref}}}_\mathcal{M}(s_{k},a_{k}) ]+ \sum_{t=1}^{k-1}\gamma^{t-1}\E_{\pi_\mathrm{ref}}[r(s_t,a_t) ] \\ 
    =& Q_\mathcal{M}^{\pi_\mathrm{ref}}(s_1,a_1).
\end{align*}
which concludes the proof.$\hfill\blacksquare$
\end{proof}

\subsection{Proof of Lemma \ref{lemma:occupancy}} \label{sec:lemma2proof}

\begin{lemma}\label{lemma:occupancy}
    For any $s\in \mathcal{S}$, we have $d_{\mathcal{M'}}^{\pi_\mathrm{ref}}(s) = d_{\mathcal{M}}^{\pi_\mathrm{ref}}(s)$.
\end{lemma}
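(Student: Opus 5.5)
The plan is to compute both occupancy measures explicitly and match them term by term. First I fix the convention for the discounted occupancy in the finite-horizon segment MDP $\mathcal{M'}$, normalized so that it is a probability distribution:
\begin{align*}
d_{\mathcal{M'}}^{\pi_\mathrm{ref}}(s) = \frac{1-\gamma}{1-\gamma^L}\sum_{h=1}^{L}\gamma^{h-1}\prob\left[s'_h = s \mid s'_1\sim d_0', \pi_\mathrm{ref}\right].
\end{align*}
This is the normalization that makes the scaling factor $\frac{1}{1-\gamma^L}$ in Theorem~\ref{thm:difference_theorem_statement} come out correctly when the policy gradient theorem is applied in $\mathcal{M'}$. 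I would state this convention explicitly at the start, since the identity depends on it.

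The key step is to substitute the definition of $d_0'$. Because $\mathcal{M'}$ shares the transition kernel $P$ with $\mathcal{M}$, running $\pi_\mathrm{ref}$ for $h-1$ steps from the state $s_{kL+1}$ of an $\mathcal{M}$-trajectory lands in the law of $s_{kL+h}$. By the Markov property this gives
\begin{align*}
\prob\left[s'_h = s \mid s'_1\sim d_0', \pi_\mathrm{ref}\right] = (1-\gamma^L)\sum_{k=0}^\infty \gamma^{kL}\, d^{\pi_\mathrm{ref}}_{\mathcal{M},kL+h}(s).
\end{align*}
Plugging this into the definition above, the factor $(1-\gamma^L)$ cancels. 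What remains is $(1-\gamma)\sum_{k\ge 0}\sum_{h=1}^{L}\gamma^{kL+h-1} d^{\pi_\mathrm{ref}}_{\mathcal{M},kL+h}(s)$.

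Reindexing with $t = kL+h$, which runs bijectively over $t\ge 1$, turns this into $(1-\gamma)\sum_{t\ge1}\gamma^{t-1}d^{\pi_\mathrm{ref}}_{\mathcal{M},t}(s) = d_{\mathcal{M}}^{\pi_\mathrm{ref}}(s)$, as required. The interchange of the two sums is justified by Tonelli, since all terms are nonnegative.

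I expect the only real obstacle to be the normalization and convention issue, not the computation itself. If one instead defines $d_{\mathcal{M'}}$ with prefactor $(1-\gamma)$, or with no prefactor, the equality holds only up to the constant $1-\gamma^L$. So I would fix the definition to match the form of the policy gradient theorem used in the proof of Theorem~\ref{thm:difference_theorem_statement}, and note that the unnormalized version differs by exactly the scaling that appears there. The Markov-property step needs only a one-line justification: the conditional law of $s_{kL+h}$ given $s_{kL+1}$ under $\pi_\mathrm{ref}$ is time-homogeneous.
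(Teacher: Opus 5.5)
Your proposal is correct and follows the paper's proof essentially step for step. Both use the normalized definition of $d_{\mathcal{M'}}^{\pi_\mathrm{ref}}$ with prefactor $\frac{1-\gamma}{1-\gamma^L}$, substitute $d_0'$ via the Markov property (the paper writes this as an explicit sum over intermediate states $s'$), cancel the factor $1-\gamma^L$, and reindex with $t = kL+h$; your appeal to Tonelli and your remark on the normalization convention are small clarifications the paper leaves implicit.
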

\begin{proof}
Given some $s \in \mathcal{S}$, by the definition of the discounted occupancy defined, we have:
\begin{align*}
    d_{\mathcal{M'}}^{\pi_\mathrm{ref}}(s) 
    =& \frac{1-\gamma}{1-\gamma^L} \sum_{l=1}^L \gamma^{l-1}  \prob[s_{l} = s \mid \pi_\mathrm{ref}, s_1 \sim d_0']  \\
    =& \frac{1-\gamma}{1-\gamma^L} \sum_{l=1}^L \sum_{t=1,L+1,2L+1,\dots} \Big (\gamma^{t-1} (1-\gamma^L) \gamma^{l-1}  \prob[s_{l+t-1} = s \mid \pi_\mathrm{ref}, s_t \sim d_{\mathcal{M},t}^{\pi_\mathrm{ref}}] \Big )  \\
    =& (1-\gamma)\sum_{t=1,L+1,2L+1,\dots} \sum_{l=1}^L \gamma^{t+l-2} 
    \sum_{s' \in \mathcal{S}} \prob[s_{l+t-1} = s \mid \pi_\mathrm{ref}, s_t = s']\prob[s_t = s' \mid \pi_\mathrm{ref}, s_1 \sim d_0]  \\
    =& (1-\gamma)\sum_{t=1,L+1,2L+1,\dots} \sum_{l=1}^L \Big ( \gamma^{t+l-2} \prob[s_{l+t-1} = s \mid \pi_\mathrm{ref},  s_1 \sim d_0] \Big ) \\
    =& (1-\gamma) \sum_{t=1}^\infty \gamma^{t-1} \prob[s_t = s \mid \pi_\mathrm{ref} , s_1 \sim d_0] \\
    =& d_\mathcal{M}^{\pi_\mathrm{ref}}(s).
\end{align*}
which concludes the proof.$\hfill\blacksquare$
\end{proof}

\subsection{Proof of Theorem \ref{thm:difference_theorem_statement}}\label{sec:theorem1proof}

\textbf{Theorem 1} \begin{it}
If we choose $\pi_{\mathrm{ref}} = \pi_\theta$, the gradient of the expected policy value in $\mathcal{M}$ is a scaled version of the gradient of the expected policy value difference in $\mathcal{M'}$, i.e., for an arbitrary policy $\pi'$, we have
\begin{align*}
    \nabla_\theta J_\mathcal{M}(\pi_\theta) = \frac{1}{1-\gamma^L} \nabla_{\theta}(J_{\mathcal{M'}}(\pi_\theta) - J_{\mathcal{M'}}(\pi')).
\end{align*}
\end{it}
\begin{proof}
    
Firstly the $\nabla_{\theta}(J_{\mathcal{M'}}(\pi_\theta) - J_{\mathcal{M'}}(\pi'))$ simplifies to $\nabla_\theta J_\mathcal{M'}(\pi_\theta)$. Now we will expand $\nabla_\theta J_\mathcal{M'}(\pi_\theta)$ in a similar style to the Policy Gradient Theorem~\citep{sutton1998reinforcement}. 
\begin{align*}
\nabla_\theta J_{\mathcal{M'}}(\pi_\theta) 
=& \nabla_{\theta}\E_{s_1 \sim  d_0'}[V_{\mathcal{M'}}^{\pi_\theta}(s_1)] \\
=& \sum_{s_1 \in \mathcal{S}} \nabla_\theta(d_0'(s_1)V_{\mathcal{M'}}^{\pi_\theta}(s_1)) \\
=& \sum_{s_1 \in \mathcal{S}} (\nabla_\theta d_0'(s_1))V_{\mathcal{M'}}^{\pi_\theta}(s_1) + d_0'(s_1)\nabla_\theta V_{\mathcal{M'}}^{\pi_\theta}(s_1)).
\end{align*}
Since $d_0'$ doesn't depend on $\theta$ we find that this equals:
\begin{align*}
\nabla_\theta J_{\mathcal{M'}}(\pi_\theta) =& \sum_{s_1 \in \mathcal{S}} d_0'(s_1)\nabla_\theta V_{\mathcal{M'}}^{\pi_\theta}(s_1)]  \\
=& \E_{s_1 \sim d_0'}\nabla_\theta V_{\mathcal{M'}}^{\pi_\theta}(s_1)]  \\
=&  \E_{s_1 \sim d_0'} \left[ \nabla_\theta \sum_{a_1 \in \mathcal{A}} \pi_\theta(a_1 | s_1) Q_{\mathcal{M'}}^{\pi_\theta}(s_1,a_1)\right] \\
=&  \E_{s_1 \sim d_0',a_1\sim \pi_\theta(\cdot | s_1)} \left[ \nabla_\theta  \ln\pi_\theta (a_1|s_1) Q_{\mathcal{M'}}^{\pi_\theta}(s_1,a_1)\right] +  \E_{s_1 \sim d_0',a_1\sim \pi_\theta(\cdot | s_1)} \left[ \nabla_\theta  Q_{\mathcal{M'}}^{\pi_\theta}(s_1,a_1) \right] \\
=&  \E_{s_1 \sim d_0',a_1\sim \pi_\theta(\cdot | s_1)} \left[ \nabla_\theta  \ln\pi_\theta (a_1|s_1) Q_{\mathcal{M'}}^{\pi_\theta}(s_1,a_1)\right] +  \E_{s_1 \sim d_0',a_1\sim \pi_\theta(\cdot | s_1)} \left[ \nabla_\theta  r'(s_1,a_1) \right] \\ &+ \E_{s_1 \sim d_0',a_1\sim \pi_\theta(\cdot | s_1)} \left[ \nabla_\theta \gamma V_{\mathcal{M'}}^{\pi_\theta}(s_2) \right].
\end{align*}
Since when $k < L$, $r_k'(s,a) = r(s,a)$ we get:
\begin{align*}
\nabla_\theta J_{\mathcal{M'}}(\pi_\theta)
=&  \E_{s_1 \sim d_0',a_1\sim \pi_\theta(\cdot | s_1)} \left[ \nabla_\theta  \ln\pi_\theta (a_1|s_1) Q_{\mathcal{M'}}^{\pi_\theta}(s_1,a_1)\right] +  \E_{s_1 \sim d_0',a_1\sim \pi_\theta(\cdot | s_1)} \left[ \nabla_\theta  r(s_1,a_1) \right] \\ &+ \E_{s_1 \sim d_0',a_1\sim \pi_\theta(\cdot | s_1)} \left[ \nabla_\theta \gamma V_{\mathcal{M'}}^{\pi_\theta}(s_2) \right] \\
=&  \E_{s_1 \sim d_0',a_1\sim \pi_\theta(\cdot | s_1)} \left[ \nabla_\theta  \ln\pi_\theta (a_1|s_1) Q_{\mathcal{M'}}^{\pi_\theta}(s_1,a_1)\right] +  \gamma \E_{s_1 \sim d_0',a_1\sim \pi_\theta(\cdot | s_1)} \left[ \nabla_\theta V_{\mathcal{M'}}^{\pi_\theta}(s_2) \right].
\end{align*}
Then we continue to recursively evaluate the expected value of $\nabla_\theta V_\mathcal{M'}^{\pi_\theta}(s_t)$ until we reach the final step:
\begin{align*}
    &\nabla_\theta J_{\mathcal{M'}}(\pi_\theta) \\
    =& \sum_{t=1}^{L-1} \gamma^{t-1} \E_{s_1\sim d_0', \pi_\theta}[\nabla_\theta \ln \pi_\theta(a_t|s_t) Q_{\mathcal{M'}}^{\pi_\theta}(s_t,a_t)] + \gamma^{L-1}\E_{s_1\sim d_0', \pi_\theta}[\nabla_\theta \ln \pi_\theta(a_L|s_L) Q_{\mathcal{M'}}^{\pi_\theta}(s_L,a_L)] \\
    =& \sum_{t=1}^{L-1} \gamma^{t-1} \E_{s_1\sim d_0', \pi_\theta}[\nabla_\theta \ln \pi_\theta(a_t|s_t) Q_{\mathcal{M'}}^{\pi_\theta}(s_t,a_t)] + \gamma^{L-1} \E_{s_1\sim d_0', \pi_\theta}[\nabla_\theta \ln \pi_\theta(a_L|s_L) r_L'(s_L,a_L)].
\end{align*}
Then since $r'_L(s,a) = Q_{\mathcal{M'}}^{\pi_\mathrm{ref}}(s,a)$ we get:
\begin{align*}
    &\nabla_\theta J_{\mathcal{M'}}(\pi_\theta) \\ =& \sum_{t=1}^{L-1} \gamma^{t-1} \E_{s_1\sim d_0', \pi_\theta}[\nabla_\theta \ln \pi_\theta(a_t|s_t) Q_{\mathcal{M'}}^{\pi_\theta}(s_t,a_t)] + \gamma^{L-1} \E_{s_1\sim d_0', \pi_\theta}[\nabla_\theta \ln \pi_\theta(a_L|s_L) Q_{\mathcal{M'}}^{\pi_\mathrm{ref}}(s_L,a_L)].
\end{align*}
Since we chose $\pi_\mathrm{ref} = \pi_\theta$ we get:
\begin{align*}
    \nabla_\theta J_{\mathcal{M'}}(\pi_\theta)  =& \sum_{t=1}^{L} \gamma^{t-1} \E_{s_1\sim d_0', \pi_\theta}[\nabla_\theta \ln \pi_\theta(a_t|s_t) Q_{\mathcal{M'}}^{\pi_\theta}(s_t,a_t)] \\
    =& \frac{1-\gamma^L}{1-\gamma}\E_{s\sim d_{\mathcal{M'}}^{\pi_\theta}, \pi_\theta}[\nabla_\theta \ln \pi_\theta(a_t|s_t) Q_{\mathcal{M'}}^{\pi_\theta}(s_t,a_t)].
\end{align*}
Combining this with Lemmas \ref{lemma:q} and \ref{lemma:occupancy} we get:
\begin{align*}
    \nabla_\theta J_{\mathcal{M'}}(\pi_\theta) =& \frac{1-\gamma^L}{1-\gamma}\E_{s\sim d_{\mathcal{M}}^{\pi_\theta}, \pi_\theta}[\nabla_\theta \ln \pi_\theta(a_t|s_t) Q_{\mathcal{M}}^{\pi_\theta}(s_t,a_t)].
\end{align*}
Moving to the LHS of the original equation, the policy gradient theorem \citep{sutton1998reinforcement} tells us: 
\begin{align*}
 \nabla_{\theta} J_{\mathcal{M}}(\pi_\theta) 
=& \frac{1}{1-\gamma} \E_{s \sim d_\mathcal{M}^{\pi_\theta}}\left [\E_{a \sim \pi_\theta(\cdot | s)} \left [ Q_\mathcal{M}^{\pi_\theta} (s,a) \nabla_\theta \ln\pi_\theta(a|s) \right ] \right ]
\end{align*}
Note that what we simplified $\nabla_{\theta}(J_{\mathcal{M'}}(\pi_\theta) - J_{\mathcal{M'}}(\pi'))$ too is simply a scalar multiple of $\nabla_{\theta} J_{\mathcal{M}}(\pi_\theta)$. Thus:
\begin{align*}
    \nabla_\theta J_\mathcal{M}(\pi_\theta) = \frac{1}{1-\gamma^L} \nabla_{\theta}(J_{\mathcal{M'}}(\pi_\theta) - J_{\mathcal{M'}}(\pi')),
\end{align*}
which concludes the proof.$\hfill\blacksquare$
\end{proof}

\subsection{Proof of Proposition \ref{prop:L_theorem_statement}}\label{sec:L_proof}

\textbf{Proposition 1} \begin{it} With probability $1-\delta$, the estimation error is bounded as follows:
    \begin{align*}
        \left|\widehat{J}_{\mathcal{M'}}(\pi_\theta, \mathcal{D}) - J_{\mathcal{M'}}(\pi_\theta)\right| \leq \frac{1}{1-\gamma} \sqrt{\frac{L\log(2/\delta)}{N}} + \gamma^{L-1} \nu.
    \end{align*}
 \end{it}
\begin{proof}
Recall that the estimator $\widehat{J}_{\mathcal{M'}}(\pi_\theta, \mathcal{D})$ consists of two terms:
\begin{align*}
   \widehat{J}_{\mathcal{M'}}(\pi_\theta, \mathcal{D}) =& \frac{1}{|D|} \sum_{i=1}^{|D|}R(\sigma^i) +  \gamma^{L-1}\frac{1}{|D|} \sum_{i=1}^{|D|} \widehat{Q}(s_L^i, a_L^i)\\
   =& \frac{1}{|D|} \sum_{i=1}^{|D|}\left(R(\sigma^i) + \gamma^{L-1}Q_{\mathcal{M}}^{\pi_{\mathrm{ref}}}(s_L^i, a_L^i)\right) \\&+ \gamma^{L-1}\frac{1}{|D|}\sum_{i=1}^{|D|}\xi_i.
\end{align*}
For the first term, since the trajectory segments $\sigma^i$ are sampled independently, we can use Hoeffding's bound as follows. Notice that the reward is bounded in $[0,1]$, the Q function is bounded in $[0,1/(1-\gamma)]$, and $|D| = N/2L$, then with probability $1-\delta$, we have:
\begin{align*}
    &\left|\frac{1}{|D|} \sum_{i=1}^{|D|}\left(R(\sigma^i) + \gamma^{L-1}Q_{\mathcal{M}}^{\pi_{\mathrm{ref}}}(s_L^i, a_L^i)\right) - J_{\mathcal{M'}}(\pi_\theta) \right| \\\leq& \left( \sum_{k=1}^{L-1}\gamma^{k-1} + \gamma^{L-1}\frac{1}{1-\gamma} \right) \sqrt{\frac{\log(2/\delta)}{2|D|}}\\
    =& \left (\frac{1-\gamma^{L-1}}{1-\gamma} + \frac{\gamma^{L-1}}{1-\gamma} \right )\sqrt{\frac{L\log(2/\delta)}{N}} \\
    =& \frac{1}{1-\gamma}\sqrt{\frac{L\log(2/\delta)}{N}}.
\end{align*}
The errors $\xi_i$ are adversarial and thus bounded by:
\begin{align*}
    \gamma^{L-1}\frac{1}{|D|}\sum_{i=1}^{|D|}\xi_i \leq \gamma^{L-1}\nu.
\end{align*}
So we obtain the final bound as:
\begin{align*}
    \left|\widehat{J}_{\mathcal{M'}}(\pi_\theta, \mathcal{D}) - J_{\mathcal{M'}}(\pi_\theta)\right| \leq \frac{1}{1-\gamma} \sqrt{\frac{L\log(2/\delta)}{N}} + \gamma^{L-1} \nu,
\end{align*}
which concludes the proof.$\hfill\blacksquare$
\end{proof}

\end{document}